\title{A Modular Theory of Feature Learning}
\newcommand{\BlackBox}{\rule{1.5ex}{1.5ex}}  
\newenvironment{proof}{\par\noindent{\bf Proof\ }}{\hfill\BlackBox\\[2mm]}
\newtheorem{theorem}{Theorem}
\newtheorem{lemma}[theorem]{Lemma}
\definecolor{darkspringgreen}{rgb}{0.09, 0.45, 0.27}
\newtheorem{assumption}[theorem]{Assumption}
\author{Daniel McNamara \qquad Cheng Soon Ong \qquad Robert C. Williamson\\
Australian National University and Data61, Canberra ACT 0200, Australia\\
\texttt{\{daniel.mcnamara, chengsoon.ong, bob.williamson\}@anu.edu.au}
}
\begin{document}

\maketitle

\begin{abstract}
Learning representations of data, and in particular learning features for a subsequent prediction task, has been a fruitful area of research delivering impressive empirical results in recent years. However, relatively little is understood about what makes a representation `good'. We propose the idea of a risk gap induced by representation learning for a given prediction context, which measures the difference in the risk of some learner using the learned features as compared to the original inputs. We describe a set of sufficient conditions for unsupervised representation learning to provide a benefit, as measured by this risk gap. These conditions decompose the problem of when representation learning works into its constituent parts, which can be separately evaluated using an unlabeled sample, suitable domain-specific assumptions about the joint distribution, and analysis of the feature learner and subsequent supervised learner. We provide two examples of such conditions in the context of specific properties of the unlabeled distribution, namely when the data lies close to a low-dimensional manifold and when it forms clusters. We compare our approach to a recently proposed analysis of semi-supervised learning.
\end{abstract}

\section{Introduction}
\label{introduction}

The predictive power of machine learning algorithms is highly dependent on the features that they receive as inputs. Traditionally, features have been handcrafted by domain experts. While this works well in some cases, it provides no performance guarantees and requires an expensive custom implementation for each new problem. Representation learning, also known as feature learning, entails automatically transforming input data to enhance the performance of prediction algorithms. In the last decade representation learning techniques using unlabeled data have been used to achieve empirical advances in areas such as computer vision \citep{hinton_reducing_2006} and natural language processing \citep{mikolov_distributed_2013}, and are expected to be at the forefront of further advances in machine learning \citep{lecun_deep_2015}. However, there are few theoretical results on when such techniques offer a benefit.

The main contribution of this paper is a set of sufficient conditions under which unsupervised representation learning provably improves task performance. These conditions can be evaluated using an unlabeled data sample, analysis of the proposed feature learner and supervised learner, and suitable assumptions about shared structure between the marginal distribution $P_X$ and the joint distribution $P_{XY}$. The novelty of our result is its generality beyond any single representation learning technique and its theoretical rather than empirical approach. Furthermore we demonstrate the importance of considering the subsequent task for which the features will be used, including the supervised learner and loss function, in the definition of what makes `good' features.

The paper is structured as follows. In Section \ref{approach} we set out sufficient conditions for unsupervised representation learning to be successful and describe its relationship with semi-supervised learning. In Section \ref{examples}, we instantiate the conditions using
the example of cluster structure in the unlabelled data, with a second example on manifold structure in the Supplement.

\subsection{This is unlike standard learning theory papers}
There are two important features of the paper worth calling out: 1) We analyse a processing \emph{pipeline}, not just a single step. The use of sequential pipelines is common in practice, but rarely addressed theoretically. Our methodology seems novel in this regard. 2) We analyse the problem in terms of risk gaps, rather than sample complexity. This is illustrated in Figure~\ref{flow_chart}, which we explain in detail in Section~\ref{approach}.
While ultimately it is desirable to say something about finite sample performance, the current technology of bounds seems inadequate for the task at hand. Fortunately, by using risk bounds we can legitimately compare performance across the complex pipeline of processing inherent in the problem we address avoiding the impropriety of only comparing upper bounds.

In Section \ref{background} we provide a comparison with current approaches to representation learning as well as existing theoretical results, which are largely focused on the limitations rather than the benefits of representation learning.

\begin{figure}
\centering

\tikzstyle{a} = [rectangle, draw=none, fill=blue!20,
text width=5em, text centered, rounded corners, minimum height=4em]
\tikzstyle{b} = [rectangle, draw=none, fill=red!60,
text width=5em, text centered, rounded corners, minimum height=4em]
\tikzstyle{line} = [draw, -latex']
\tikzstyle{dashed_line} = [draw, -latex',dashed]
\tikzstyle{cloud} = [draw, ellipse, node distance=2.5cm,
minimum height=2em, text width=1.5cm]
\tikzset{
between/.style args={#1 and #2}{
at = ($(#1)!0.5!(#2)$)
}
}
\begin{tikzpicture}[node distance = 2.5cm, auto,scale=0.72, transform shape]]
\node [cloud] (risk_gap) {risk gap $\Delta R$};
\node [b, below left=0.5cm of risk_gap] (risk_Z) {risk $R(h^Z \circ f)$};
\node [b, left =0.5cm of risk_Z] (hypothesis_Z) {hypothesis $h^Z \circ f$};
\node [a, below right=0.5cm of risk_gap] (risk) {risk $R(h)$};
\node [a, right =0.5cm of risk] (hypothesis) {hypothesis $h$};
\node [a, right =0.5cm of hypothesis,draw=black,line width=1mm,label={[xshift=-0.5cm, yshift=-2.1cm,style=blue]$F$}] (hypothesis_learner) {hypothesis learner $h_L$};
\node [a, above right =0.5cm and 0.5cm of hypothesis,draw=black,line width=1mm,label={[xshift=-0.5cm, yshift=0cm,style=blue]$E$}] (PXY) {labeled distribution $P_{XY}$};
\node [a, above =0.5cm of hypothesis] (sample) {labeled sample $S_l$};
\node [b, below left =0.5cm and 0.5cm of hypothesis_Z,draw=black,line width=1mm, label={[xshift=-0.5cm, yshift=-2.1cm,style=red]$D$}] (hypothesis_learner_top) {hypothesis learner $h_L$};
\node [b, below = 0.5cm of hypothesis_Z] (hypothesis_learner_Z) {hypothesis learner $h_L^Z$};
\node [b, above left =0.5cm and 0.5cm of hypothesis_Z,draw=black,line width=1mm,label={[xshift=-0.5cm, yshift=0cm,style=red]$B$}] (PXY_top) {labeled distribution $P_{XY}$};
\node [b, left =0.5cm of hypothesis_Z] (feature_map) {feature map $f$};
\node [b, above left =0.5cm and 0.5cm of feature_map,draw=black,line width=1mm,label={[xshift=-0.5cm, yshift=0cm,style=red]$A$}] (PX) {unlabeled distribution $P_X$};
\node [b, below left =0.5cm and 0.5cm of feature_map,draw=black,line width=1mm,label={[xshift=-0.5cm, yshift=-2.1cm,style=red]$C$}] (feature_learner) {feature learner $f_L$};
\node [b, above =0.5cm of hypothesis_Z] (sample_Z) {labeled sample $S_l^Z$};
\node [b, left =0.5cm of feature_map] (sample_unlabeled) {unlabeled sample $S_u$};
\path [line] (feature_learner) -- (feature_map);
\path [line] (hypothesis_learner_top) -- (hypothesis_learner_Z);
\path [line] (feature_map) -- (hypothesis_Z);
\path [line] (hypothesis_Z) -- (risk_Z);
\path [line] (risk_Z) -- (risk_gap);
\path [line] (hypothesis_learner_Z) -- (hypothesis_Z);
\path [line] (hypothesis_learner_Z) -- (hypothesis_Z);
\path [line] (risk) -- (risk_gap);
\path [line] (hypothesis) -- (risk);
\path [line] (PXY) -- (sample);
\path [line] (sample) -- (hypothesis);
\path [line] (PXY_top) -- (sample_Z);
\path [line] (feature_map) -- (sample_Z);
\path [line] (sample_Z) -- (hypothesis_Z);
\path [line] (hypothesis_learner) -- (hypothesis);
\path [line] (PX) -- (sample_unlabeled);
\path [line] (sample_unlabeled) -- (feature_map);
\node [above =0.5cm of PXY_top,red](feature_learning) {With representation learning};
\node [above =0.5cm of sample,blue](no_feature_learning) {Without representation learning};
\end{tikzpicture}
\caption{Measuring the effect of unsupervised representation learning. The red path (left) shows the approach with representation learning, the blue path (right) shows the approach without representation learning, and the risk gap determines which of the two paths has lower risk. The arrows indicate dependencies. Source nodes are shown with a black border and are annotated with corresponding conditions from Table \ref{conditions_table}.} \label{flow_chart}
\end{figure}

\section{Related work}
\label{background}

Many representation learning techniques have been developed, including those using unlabeled data, and have achieved considerable empirical success \citep{bengio_representation_2013} but few theoretical guarantees concerning the effect on task performance.
The literature to date demonstrates the usefulness of such techniques while also highlighting the need for more analysis of when and why they work.

The desire for computational efficiency has motivated techniques to learn low-dimensional manifold embeddings. Principal components analysis (PCA) is the canonical such technique, and has been extended to kernel variants such as Isomap, Laplacian eigenmaps and local linear embedding \citep{mohri_foundations_2012}. It has been shown theoretically that it is possible to compress a finite set of high dimensional points to a low dimensional representation while bounding the distortion in pointwise Euclidean distances \citep{johnson_extensions_1984}. However, manifold learning approaches have typically not proved any improvement in the performance of a subsequent learner.

Empirical results in the field of deep learning have shown the power of learning multiple levels of representations. While more recent results have focused on supervised representation learning, initial advances used unsupervised techniques such as the autoencoder \citep{hinton_reducing_2006}. The effect of unsupervised pre-training has been studied empirically \citep{erhan_why_2010}, with observed benefits in terms of both reduced training set error and improved generalization. While attempts have been made to theorize unsupervised representation learning in studies such as \citep{saxe_exact_2014} --- which concluded that a certain kind of random initialization could achieve the same condition as unsupervised pre-training  --- mostly experimental results have outpaced theory. Such techniques often learn overcomplete representations (in a higher dimension than the original inputs), moving away from a paradigm of dimensionality reduction to one of learning features which are well-suited to the final classifier, which in neural networks is typically a linear separator.

Despite these advances, theoretical results derived from information theory are pessimistic. Using learned features can never decrease the risk of the Bayes optimal classifier \citep{van_rooyen_theory_2015}. This is because the set of hypotheses involving a feature transformation step followed by a prediction step is a subset of all possible hypotheses. This result is similar to the data processing inequality, which states that if random variables $x,y,z$ form a Markov chain $y \to x \to z$, then $I(z,y) \leq I(x,y)$ where $I$ is mutual information \citep{coveR_Elements_2012}. Both results share the idea that information cannot be created by data manipulation. Hence, representation learning cannot be shown to be useful without considering the subsequent hypothesis learner.

\subsection{Relationship to existing representation learning approaches}
\label{existing_algorithms_relationship}

Techniques aimed at learning linearly separable features can also be analyzed using our theorems. Our example (see Section \ref{cluster}), which involves exploiting cluster structure in the unlabeled data, shares the key objective of deep neural networks: learning a representation which will be linearly separable. Methods which aim to learn the kernel also share this objective, although the kernel function implicitly rather than explicitly defines the feature space.

Manifold learning techniques can be analyzed using our theorems, and is discussed in the Supplement.
The conditions in these cases require that the unlabeled data lies near a low-dimensional manifold, that the manifold structure is related to the labels, and that the proposed reparametrization in terms of the manifold structure is compatible with the supervised learner used.

A distinctive aspect of our approach is that it provides high probability performance guarantees in terms of the performance of a particular subsequent learner, unlike most unsupervised learning approaches where the objective function has no relation to the supervised task of interest \citep{sutskever_towards_2015}. The high probability risk bounds we present in our examples demonstrate the feasibility of our approach but in future could be tightened. Another original element is the use of property testing to determine which representation learning technique is best suited to the unlabeled data, rather than adopting a `one size fits all' approach.

\subsection{Relationship to semi-supervised learning}
\label{semisupervised_relationship}

The schematic presented in Figure \ref{flow_chart} can be seen as a special case of a more general dilemma faced by machine learning practitioners. Given some existing system, an additional step is proposed. We would like to know under what circumstances such a step will enhance the system's performance. We may also characterize semi-supervised learning in this way, where the step is some particular use of unlabeled data. The differences between the two approaches are shown in Figure \ref{ssl_fig}.

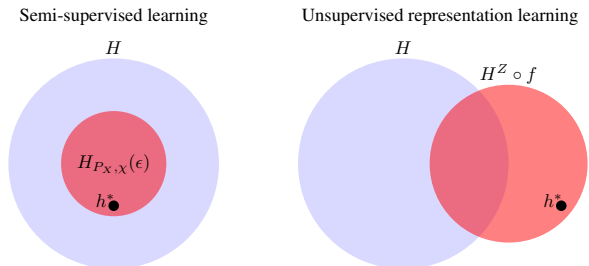
\begin{figure}
\centering

\begin{tikzpicture}[scale=0.7, transform shape]

\path [draw=none,fill=blue!30,opacity=0.5] (0,0) circle (2);
\path [draw=none,fill=red,opacity=0.5] (0,0) circle (1);
\node[draw=none] at (0,2.8) {Semi-supervised learning};
\node[draw=none] at (6.2,2.8) {Unsupervised representation learning};
\node[draw=none] at (0,2.2) {$H$};
\node[draw=none] at (0,0) {$H_{P_X,\chi}(\epsilon)$};
\path [draw=none,fill=black] (0,-0.8) circle (0.1);
\node [left] at (0.15,-0.7) {$h^*$};
\node[draw=none] at (5.5,2.2) {$H$};

\path [draw=none,fill=blue!30,opacity=0.5] (5.5,0) circle (2);
\path [draw=none,fill=red,opacity=0.5] (7.5,0) circle (1.5);
\node[draw=none] at (7.5,1.7) {$H^Z \circ f$};
\path [draw=none,fill=black] (8.5,-0.8) circle (0.1);
\node [left] at (8.65,-0.7) {$h^*$};
\end{tikzpicture}

\caption{Relationship between semi-supervised learning (left) and unsupervised representation learning (right). In the formulation of semi-supervised learning proposed by \cite{balcan_discriminative_2010}, the hypothesis space $H$ is pruned to a subset $H_{P_X,\chi}(\epsilon)\subset H$ which contains the target function $h^*$ (see Appendix \ref{ssl_bound}). In unsupervised representation learning, the hypothesis space changes to $H^Z \circ f$, which is especially useful when $h^* \in H^Z \circ f \wedge h^* \not\in H$. The feature map $f$ is learned from unlabeled data. Cases where $H^Z$ is related to $H$ through transparent polymorphism are of particular interest.} \label{ssl_fig}
\vspace{-15pt}
\end{figure}

In semi-supervised learning, if a compatibility function exists which allows the elimination of incompatible hypotheses using only unlabeled data, performance may improve as the optimization will be more straightforward over a smaller hypothesis class and less sensitive to noise where few labeled data points are available. Furthermore, a tighter generalization error bound will be possible \citep{balcan_discriminative_2010} (see Appendix \ref{ssl_bound}). However, if the target function lies outside the original hypothesis class, semi-supervised learning will not help to discover it.

In unsupervised representation learning, the hypothesis class changes and hence it is possible to learn hypotheses not included in the original hypothesis class. The cluster example (Section \ref{cluster}) illustrates this point. In some cases the size of the hypothesis class is reduced, such as using a lower-dimensional representation, so that a tighter generalization error bound is possible, or equivalently a reduction in sample complexity (see Theorem \ref{sample_complexity_theorem}). This paper is more ambitious again in that it seeks to show that the risk upper bound using the learned representation is not only lower than the original risk upper bound, but lower than the original risk \textit{lower} bound (see Theorem \ref{risk_gap_theorem}).

\section{When unsupervised representation learning improves task performance}
\label{approach}

Our goal is to determine when unsupervised representation learning enhances the performance of a subsequent supervised hypothesis learner. This situation describes a range of common machine learning scenarios. Do the features learned by an autoencoder enhance the performance of a linear classifier compared to using the original inputs? Does unsupervised pre-training improve the performance of a supervised neural network? Does a particular kernel function outperform a linear kernel when used with a hypothesis class of linear separators (recalling that kernel functions implicitly specify a feature space)? Do distributed vector representations of words outperform one-hot unigram representations for natural language processing tasks? Our approach to estimating the effect of unsupervised representation learning is shown in Figure \ref{flow_chart}. We examine when it can be shown that the path including a representation learning step reduces risk.

\subsection{Problem statement}
\label{problem_statement}

Let $X$, $Y$ and $Z$ be the input, output and learned feature spaces respectively. Let $f_L$ be an unsupervised feature learning algorithm, $S_u \in X^{m_u}$ be an unlabeled sample, and $f: X \to Z$ be the feature map learned using $f_L$ and $S_u$. Let $h_L$ be a supervised hypothesis learner, $S_l \in \{X \times Y\}^{m_l}$ be a labeled sample, and $h: X \to Y$ be the hypothesis learned using $h_L$ and $S_l$. Let $h_L^Z$ be the supervised hypothesis learner accepting inputs in the learned feature space, $S_l^Z=\bigcup\limits_{\{x,y\}\in S_l}\{f(x),y\}$ be the labeled sample transformed into the learned feature space, and $h^Z: Z \to Y$ be the hypothesis learned using $h_L^Z$ and $S_l^Z$. The risk of $h$ is $R(h)=\mathbb{E}_{\{x,y\} \sim P_{XY}}[L(h(x),y)]$, where $L$ is a loss function. Similarly, let the risk of $h^Z$ using the feature map $f$ be $R(h^Z \circ f)=\mathbb{E}_{\{x,y\} \sim P_{XY}}[L(h^Z(f(x)),y)]$.

The procedure in Figure \ref{flow_chart} involves comparing a learner $h_L$, which produces an hypothesis of type $X \to Y$, with a learner $h_L^Z$ which produces an hypothesis of type $Z \to Y$. While these learners have different type signatures, to isolate the effect of representation learning it will be convenient to consider situations where the two learners are similar; for example, both are logistic regression but accept inputs of different dimension. If it is possible to construct $h_L^Z$ from $h_L$ through a straightforward change of type signature without substantively changing the learner, we will refer to $h_L$ as `transparently polymorphic' (see Supplement \ref{transparent_polymorphism} for further discussion).

\begin{table}
\small

\begin{center}
\begin{tabular}{p{1.5cm}|p{3.8cm}|p{4.8cm}|p{2.2cm}}

\bf Condition & \bf Description & \bf Formal statement & \bf Verification \\ \hline
\multicolumn{4}{l}{\it Upper bound on risk using representation learning} \\ \hline
\rule{0pt}{3ex}$A(P_X)$ & If property test passes, marginal distribution has some structure & $\hat{R}_a(P_X)\leq\hat{\epsilon}_A \implies$  \newline $R_a(P_X)\leq\epsilon_A \wedge R_A(P_X)\leq\epsilon_A$ & Analysis of property test \\ \hline
$B(P_{XY})$ & Joint distribution shares marginal distribution structure if present & $R_A(P_X)\leq\epsilon_A \implies$ $R_B(P_{XY})\leq\epsilon_B$ & Domain expert \\ \hline
\rule{0pt}{3ex}$C(f_L)$ &  Feature learner can exploit marginal distribution structure if present & $ R_a(P_X)\leq\epsilon_A \implies R_C(f)\leq \epsilon_C$ & Analysis of $f_L$  \\ \hline
\rule{0pt}{3ex}$D(h_L)$ & Hypothesis learner can exploit learned features & $R_B(P_{XY})\leq\epsilon_B \wedge R_C(f)\leq\epsilon_C \implies R(h^Z \circ f)\leq \epsilon_{\max}^Z$ & Analysis of $h_L$ \\ \hline
\multicolumn{4}{l}{\it Lower bound on risk without using representation learning} \\ \hline
$E(P_{XY})$ & Joint distribution has property that it is `hard' to learn & $R_E(P_{XY})\geq\epsilon_E$ & Domain expert \\ \hline
$F(P_X,h_L)$ & Hypothesis learner cannot learn accurately from original inputs & $\int_yP_{XY}dy=P_X \wedge R_E(P_{XY})\geq\epsilon_E\wedge R_B(P_{XY})\leq\epsilon_B \implies R(h)\geq \epsilon_{\min}$ & Test using unlabeled sample and analysis of $h_L$ \\
\end{tabular}
\end{center}
\caption{Sufficient conditions for representation learning to improve performance. The definitions in the `Formal statement' column use a series of intermediate risk terms as discussed in Section \ref{sufficient_conditions}, and are defined for our two working examples in Tables \ref{manifold_table} and \ref{cluster_table} (see Supplement). The `Verification' column indicates the approach to checking whether the condition holds. See Figure \ref{dependency_map} for a visualization of the conditions.} \label{conditions_table}
\end{table}

\subsection{Set of sufficient conditions}
\label{sufficient_conditions}

We provide a set of sufficient conditions for unsupervised representation learning to reduce the risk of a hypothesis learner, as shown in Table \ref{conditions_table}. We motivate these conditions by noting that each independent aspect of the prediction context (the source nodes in Figure \ref{flow_chart}) is associated with a condition. Hence we are unlikely to be able to reduce the set of conditions, a topic we discuss further in Supplement \ref{motivation}. The conditions allow us to measure the effect of representation learning by separately analyzing specific aspects of the problem setting and then combining the results.

The conditions use a series of intermediate risk terms, measuring the extent to which particular properties of the prediction context hold. $R_A(P_X)$ measures the extent to which some member of a class of structural properties holds for $P_X$, $R_a(P_X)$ measures the extent to which some specific structural property holds, and $\hat{R}_a(P_X)$ is an empirical estimate of $R_a(P_X)$ calculated using the unlabeled sample $S_u$ and a property test. Furthermore, $R_B(P_{XY})$ measures the extent to which the labeled distribution $P_{XY}$ shares structure with the unlabeled distribution, $R_C(f)$ measures the extent to which the learned feature map $f$ exploits the structure of the unlabeled distribution, and $R_E(P_{XY})$ measures the complexity of the joint distribution.

We show that, given an unlabeled sample and a specific prediction context, it is possible to obtain a high probability upper bound on the risk of a hypothesis learner using features learned from the unlabeled sample, and a high probability lower bound on the risk gap induced by using these learned features. A standard approach would be to compare the results of experiments on a supervised task with and without the representation learning step. Our alternative approach offers two benefits. First, it provides a guarantee on the benefit of representation learning for a class of tasks rather than requiring validation for each task individually. Second, it disaggregates why representation learning is effective, allowing the development of new techniques that are theoretically well-grounded.

 Theorem \ref{sample_complexity_theorem} shows that if a set of sufficient conditions hold, with high probability it is possible to upper bound the risk of a supervised learner using features learned from unlabeled data. The conditions can each be separately evaluated and collectively guarantee that the bound holds. The conditions require that $P_X$ has some testable structure, that $P_{XY}$ shares structure with $P_X$, that the feature learner $f_L$ exploits the structure in $P_X$, and that the hypothesis learner $h_L$ exploits the learned features. The bound $\epsilon_{\max}^Z$ achieved is specific to the prediction context (see Theorem \ref{manifold_theorem} for an example). Hence the contribution of the theorem is the decomposition of the problem into tractable components, rather than a particular numerical bound.  The result motivates a high level algorithm to select the best feature learner $f_L$ from a number of options, as shown in Algorithm \ref{feature_learning_Algorithm} (see Supplement \ref{algorithm_uor_selecting_A_feature_learner}). Such risk bounds can also be used to compute labeled sample complexity.

\begin{theorem}
\label{sample_complexity_theorem}
Suppose that given a sample $S_u$ drawn from $P_X$ and a property test, $\hat{R}_a(P_X)\leq\hat{\epsilon}_A$. Suppose also that given a feature learner $f_L$ and a hypothesis learner $h_L$, it can be shown that with probability at least $1-\delta$, conditions $A, B, C$ and $D$ in Table \ref{conditions_table} all hold. Then if a hypothesis $h^Z \circ f$ is constructed from $S_u$ and a sample $S_l$ drawn from $P_{XY}$ as described in Section \ref{problem_statement}, with probability at least $1-\delta$, $R(h^Z \circ f) \leq \epsilon_{\max}^Z$.
\end{theorem}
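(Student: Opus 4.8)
The plan is to read the theorem as a composition of the four conditional implications encoded by conditions $A$ through $D$ in Table~\ref{conditions_table}, carried out on a single high-probability event on which all four hold. First I would fix the probability space: the only randomness in the statement is the draw of $S_u$ (and of $S_l$), together with any internal randomness of the learners $f_L$ and $h_L$, and both $f$ and $h^Z$ are measurable functions of these. Let $\mathcal{E}$ denote the event that conditions $A$, $B$, $C$ and $D$ all hold; by hypothesis $\Pr(\mathcal{E}) \ge 1-\delta$. Everything after this point is a deterministic argument on $\mathcal{E}$.

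Next I would chain the implications on $\mathcal{E}$, following the red (left) path of Figure~\ref{flow_chart}. The premise $\hat{R}_a(P_X)\le\hat{\epsilon}_A$ is assumed outright (source node $A$, fed by $S_u$), so condition $A$ yields both $R_a(P_X)\le\epsilon_A$ and $R_A(P_X)\le\epsilon_A$. Feeding $R_A(P_X)\le\epsilon_A$ into condition $B$ (source node $B$, via $P_{XY}$) gives $R_B(P_{XY})\le\epsilon_B$; feeding $R_a(P_X)\le\epsilon_A$ into condition $C$ (source node $C$, via $f_L$) gives $R_C(f)\le\epsilon_C$. With both $R_B(P_{XY})\le\epsilon_B$ and $R_C(f)\le\epsilon_C$ available, condition $D$ (source node $D$, via $h_L$) delivers $R(h^Z\circ f)\le\epsilon_{\max}^Z$. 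Hence $\mathcal{E}\subseteq\{R(h^Z\circ f)\le\epsilon_{\max}^Z\}$, and therefore $\Pr\big(R(h^Z\circ f)\le\epsilon_{\max}^Z\big)\ge\Pr(\mathcal{E})\ge 1-\delta$, which is the claim.

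The substance of the proof is organizational rather than technical: there is no inequality to estimate, only a diagram chase through Table~\ref{conditions_table}. The single point that needs care — and the closest thing to an obstacle — is bookkeeping the probability and the thresholds. On the probability side, the $1-\delta$ must be a \emph{single} event covering all four conditions simultaneously; in concrete instantiations (as in Theorem~\ref{manifold_theorem}) this is obtained by a union bound over the individual conditions, each of which may itself fail with some small probability, and one must make sure those sub-probabilities were allocated to sum to at most $\delta$. On the threshold side, one must check that the intermediate risk quantities $R_a$, $R_A$, $R_B$, $R_C$ are invoked with the \emph{same} constants $\epsilon_A,\epsilon_B,\epsilon_C$ throughout, so that the antecedent of each successive implication is literally the consequent produced by the previous one; otherwise the chain does not close. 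Once those consistency checks are in place, the theorem follows immediately, and its value — as the surrounding text stresses — lies in the modular decomposition it certifies rather than in the particular bound $\epsilon_{\max}^Z$.
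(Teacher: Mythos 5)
Your proposal is correct and matches the paper's own proof: both simply chain the implications of conditions $A$, $B$, $C$ and $D$ from Table \ref{conditions_table} on the single probability-at-least-$1-\delta$ event where all four hold, concluding $R(h^Z \circ f)\leq\epsilon_{\max}^Z$ on that event. Your additional remarks on the union bound and threshold consistency are sensible bookkeeping but not a departure from the paper's argument.
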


Theorem \ref{risk_gap_theorem} shows that if additional conditions hold, with high probability unsupervised representation learning reduces risk compared to using the original inputs for some supervised learner. These additional conditions require that the joint distribution is `hard' to learn and that the hypothesis learner cannot learn accurately from the original inputs. The result will be of interest when the risk gap $\Delta R>0$. This is the main result of the paper as it decomposes the effect of unsupervised representation learning into a set of conditions that can be separately evaluated using an unlabeled sample, suitable domain-specific assumptions about $P_{XY}$, and analytical properties of the feature learner $f_L$ and hypothesis learner $h_L$. Once again, the bound $\epsilon_{\min}-\epsilon_{\max}^Z$ achieved is specific to the prediction context (see Theorem \ref{cluster_theorem} for an example), rather than a particular numerical bound.

\begin{theorem}
\label{risk_gap_theorem}
Suppose that given a sample $S_u$ drawn from $P_X$ and a property test, $\hat{R}_a(P_X)\leq\hat{\epsilon}_A$. Suppose also that given a feature learner $f_L$ and a hypothesis learner $h_L$, it can be shown with probability at least $1-\delta$, conditions $A, B, C, D, E$ and $F$ in Table \ref{conditions_table} all hold. Then if hypotheses $h$ and $h^Z \circ f$ are constructed from $S_u$ and a sample $S_l$ drawn from $P_{XY}$ as described in Section \ref{problem_statement}, with probability at least $1-\delta$, $\Delta R:=R(h)-R(h^Z \circ f) \geq \epsilon_{\min}-\epsilon_{\max}^Z$.
\end{theorem}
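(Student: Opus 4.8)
The plan is to derive Theorem~\ref{risk_gap_theorem} by combining the upper bound from Theorem~\ref{sample_complexity_theorem} with a matching lower bound obtained from conditions $E$ and $F$, all under a single ``good event'' of probability at least $1-\delta$. First I would invoke Theorem~\ref{sample_complexity_theorem}: since conditions $A,B,C,D$ hold (on the event of probability $\geq 1-\delta$), we immediately get $R(h^Z\circ f)\leq\epsilon_{\max}^Z$. The remaining work is to show that on the \emph{same} event the companion inequality $R(h)\geq\epsilon_{\min}$ holds, so that subtracting yields $\Delta R = R(h)-R(h^Z\circ f)\geq\epsilon_{\min}-\epsilon_{\max}^Z$.

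For the lower bound I would chain the implications in the bottom half of Table~\ref{conditions_table}. Condition $E(P_{XY})$ gives $R_E(P_{XY})\geq\epsilon_E$ outright (a domain-expert assumption, not a high-probability statement). Condition $B(P_{XY})$, already available because we assumed $A$ and used $A\Rightarrow R_A(P_X)\leq\epsilon_A \Rightarrow R_B(P_{XY})\leq\epsilon_B$, supplies the premise $R_B(P_{XY})\leq\epsilon_B$ needed by $F$. The marginal-consistency hypothesis $\int_y P_{XY}\,dy = P_X$ holds by construction of the problem setup in Section~\ref{problem_statement}. Feeding these three facts — $\int_y P_{XY}\,dy=P_X$, $R_E(P_{XY})\geq\epsilon_E$, and $R_B(P_{XY})\leq\epsilon_B$ — into condition $F(P_X,h_L)$ yields $R(h)\geq\epsilon_{\min}$. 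Combining with the upper bound gives the claimed risk-gap inequality, and since every ingredient either holds deterministically or holds on the single event of probability $\geq 1-\delta$ guaranteed in the hypothesis, no union bound or probability bookkeeping beyond that event is required.

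The main subtlety — and the step I would be most careful about — is the probabilistic bookkeeping: ensuring that conditions $A$ through $F$ are all asserted to hold \emph{simultaneously} on one event of probability $\geq 1-\delta$, rather than each on its own such event (which would degrade the overall confidence via a union bound). The theorem statement is phrased precisely to grant this: it says ``it can be shown with probability at least $1-\delta$ [that] conditions $A,B,C,D,E$ and $F$ all hold'', so the proof should simply condition on that joint event and proceed deterministically. A secondary point worth making explicit is that the $h$ appearing in $R(h)\geq\epsilon_{\min}$ is exactly the hypothesis produced by $h_L$ on $S_l$ as fixed in Section~\ref{problem_statement}, and that condition $F$ is stated about that learner, so the two occurrences of $h$ refer to the same object; likewise $h^Z\circ f$ is the same composite hypothesis in both theorems. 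With those identifications in place the argument is a short deductive chain, and I would present it as such.
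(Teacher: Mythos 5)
Your proposal is correct and follows essentially the same route as the paper's own proof: apply Theorem \ref{sample_complexity_theorem} for the upper bound on $R(h^Z\circ f)$, chain conditions $A$, $B$, $E$ and $F$ to obtain $R(h)\geq\epsilon_{\min}$, and note that both conclusions hold on the single event of probability at least $1-\delta$ on which all conditions hold. Your explicit remarks on the marginal-consistency premise and on identifying the hypotheses across the two bounds are slightly more careful than the paper's terser version, but the argument is the same.
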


The proofs of the above theorems essentially just string together the conditions.
We instantiate Theorems \ref{sample_complexity_theorem} and \ref{risk_gap_theorem} with two examples, in Section \ref{examples} and the Supplement.

\section{Application of Theorem~\ref{sample_complexity_theorem} and \ref{risk_gap_theorem}: Cluster representation}
\label{examples}

\begin{figure}[t]
\centering

\begin{tikzpicture}[scale=0.65, transform shape]
\draw[->,blue!30] (0.5,1) to[out=-5,in=85] (7.45,1.2);
\draw[->,red] (0,-2) to[out=-5,in=-85] (8.5,-0.2);
\draw[->,black] (2.25,0.75) to[out=-5,in=85] (7.45,0.2);
\draw [<->] (-3.2,0) -- (3.2,0);
\draw [<->] (0,-3.2) -- (0,3.2);
\draw [<->](0.5,-0.5) -- node[above] {$\gamma$} ++(0.5,0.5);
\path [draw=none,fill=blue!30] (7.5,1) circle (0.1);
\path [draw=none,fill=red] (8.5,0) circle (0.1);
\path [draw=none,fill=black] (7.5,0) circle (0.1);
\draw [<->] (4.3,0) -- (10.7,0);
\draw [<->] (7.5,-3.2) -- (7.5,3.2);

\draw [fill=red]  (-0.5,-2) rectangle (0,-1);
\draw [fill=red]  (0,-2) rectangle (0.5,-0.5);

\draw [fill=blue!30]  (-2.5,-2) rectangle (-2,-1);
\draw [fill=blue!30]  (-2,-2) rectangle (-1.5,0);
\draw [fill=blue!30]  (-1.5,-0.5) rectangle (-1,0.5);
\draw [fill=blue!30]  (-1.5,0.5) rectangle (-1,1);
\draw [fill=blue!30]  (0,0.5) rectangle (0.5,1.5);
\draw [fill=blue!30]  (0.5,0.5) rectangle (1,1);
\draw [fill=blue!30]  (1.5,0) rectangle (2,0.5);
\draw [fill=blue!30]  (1.5,-2) rectangle (2,0);
\draw [fill=blue!30]  (2,-1) rectangle (2.5,-0.5);
\draw [fill=blue!30]  (-0.5,0.5) rectangle (0,1);
\draw [fill=blue!30]  (1,0.5) rectangle (1.5,1);

\draw[step=0.5,black,thin] (-2.5,-2.5) grid (2.5,2.5);

\path [draw=none,fill=black] (1.7,-0.9) circle (0.1);
\draw [-] (1.7,-0.9) -- (1.6,-1.3);
\path [draw=none,fill=black] (1.6,-1.3) circle (0.1);
\draw [-] (1.6,-1.8) -- (1.6,-1.3);
\path [draw=none,fill=black] (1.6,-1.8) circle (0.1);
\draw [-] (1.6,-1.8) -- (1.7,-1.6);
\draw [-] (1.6,-1.3) -- (1.7,-1.6);
\path [draw=none,fill=black] (1.7,-1.6) circle (0.1);
\draw [-] (1.9,-1.4) -- (1.7,-1.6);
\draw [-] (1.9,-1.4) -- (1.6,-1.8);
\draw [-] (1.9,-1.4) -- (1.6,-1.3);
\draw [-] (1.9,-1.4) -- (1.7,-0.9);
\draw [-] (1.7,-0.9) -- (1.7,-1.6);
\path [draw=none,fill=black] (1.9,-1.4) circle (0.1);
\draw [-] (1.7,-0.9) -- (2.1,-0.8);
\draw [-] (1.9,-1.4) -- (2.1,-0.8);
\draw [-] (1.6,-1.3) -- (2.1,-0.8);
\path [draw=none,fill=black] (2.1,-0.8) circle (0.1);
\draw [-] (1.7,-0.9) -- (1.6,-0.7);
\draw [-] (1.6,-1.3) -- (1.6,-0.7);
\draw [-] (2.1,-0.8) -- (1.6,-0.7);
\path [draw=none,fill=black] (1.6,-0.7) circle (0.1);
\draw [-] (2.1,-0.8) -- (1.8,-0.2);
\draw [-] (1.7,-0.9) -- (1.8,-0.2);
\draw [-] (1.6,-0.7) -- (1.8,-0.2);
\path [draw=none,fill=black] (1.8,-0.2) circle (0.1);
\draw [-] (1.8,-0.2) -- (1.6,0.1);
\path [draw=none,fill=black] (1.6,0.1) circle (0.1);
\draw [-] (1.6,0.1) -- (1.8,0.2);
\draw [-] (1.8,-0.2) -- (1.8,0.2);
\path [draw=none,fill=black] (1.8,0.2) circle (0.1);
\draw [-] (1.3,0.7) -- (1.8,0.2);
\draw [-] (1.3,0.7) -- (1.6,0.1);
\path [draw=none,fill=black] (1.3,0.7) circle (0.1);
\draw [-] (1.3,0.7) -- (0.7,0.7);
\path [draw=none,fill=black] (0.7,0.7) circle (0.1);
\draw [-] (0.7,0.7) -- (0.4,0.8);
\path [draw=none,fill=black] (0.4,0.8) circle (0.1);
\draw [-] (0.4,0.8) -- (0.2,1.2);
\draw [-] (0.7,0.7) -- (0.2,1.2);
\path [draw=none,fill=black] (0.2,1.2) circle (0.1);
\draw [-] (0.7,0.7) -- (0.1,0.6);
\draw [-] (0.4,0.8) -- (0.1,0.6);
\draw [-] (0.1,0.6) -- (0.2,1.2);
\path [draw=none,fill=black] (0.1,0.6) circle (0.1);
\draw [-] (0.1,0.6) -- (-0.4,0.6);
\path [draw=none,fill=black] (-0.4,0.6) circle (0.1);
\draw [-] (-1.1,0.7) -- (-0.4,0.6);
\path [draw=none,fill=black] (-1.1,0.7) circle (0.1);
\draw [-] (-1.1,0.7) -- (-1.1,0.1);
\path [draw=none,fill=black] (-1.1,0.1) circle (0.1);
\draw [-] (-1.1,0.1) -- (-1.2,-0.1);
\path [draw=none,fill=black] (-1.2,-0.1) circle (0.1);
\draw [-] (-1.1,0.1) -- (-1.6,-0.4);
\draw [-] (-1.2,-0.1) -- (-1.6,-0.4);
\path [draw=none,fill=black] (-1.6,-0.4) circle (0.1);
\draw [-] (-1.6,-0.4) -- (-1.6,-0.6);
\draw [-] (-1.2,-0.1) -- (-1.6,-0.6);
\path [draw=none,fill=black] (-1.6,-0.6) circle (0.1);
\draw [-] (-1.6,-0.6) -- (-1.7,-0.9);
\draw [-] (-1.6,-0.4) -- (-1.7,-0.9);
\path [draw=none,fill=black] (-1.7,-0.9) circle (0.1);
\draw [-] (-1.7,-0.9) -- (-1.8,-1.2);
\draw [-] (-1.6,-0.6) -- (-1.8,-1.2);
\path [draw=none,fill=black] (-1.8,-1.2) circle (0.1);
\draw [-] (-1.8,-1.2) -- (-1.9,-0.7);
\draw [-] (-1.6,-0.6) -- (-1.9,-0.7);
\draw [-] (-1.7,-0.9) -- (-1.9,-0.7);
\draw [-] (-1.6,-0.4) -- (-1.9,-0.7);
\path [draw=none,fill=black] (-1.9,-0.7) circle (0.1);
\draw [-] (-1.9,-0.7) -- (-1.9,-1.4);
\draw [-] (-1.8,-1.2) -- (-1.9,-1.4);
\draw [-] (-1.7,-0.9) -- (-1.9,-1.4);
\path [draw=none,fill=black] (-1.9,-1.4) circle (0.1);
\draw [-] (-1.9,-1.4) -- (-2.4,-1.3);
\draw [-] (-1.8,-1.2) -- (-2.4,-1.3);
\path [draw=none,fill=black] (-2.4,-1.3) circle (0.1);
\draw [-] (-2.4,-1.3) -- (-2.1,-1.8);
\draw [-] (-1.9,-1.4) -- (-2.1,-1.8);
\draw [-] (-1.8,-1.2) -- (-2.1,-1.8);
\path [draw=none,fill=black] (-2.1,-1.8) circle (0.1);
\draw [-] (-1.8,-1.2) -- (-1.6,-1.8);
\draw [-] (-1.9,-1.4) -- (-1.6,-1.8);
\draw [-] (-2.1,-1.8) -- (-1.6,-1.8);
\path [draw=none,fill=black] (-1.6,-1.8) circle (0.1);

\path [draw=none,fill=black] (0.1,-0.8) circle (0.1);
\draw [-] (0.1,-0.8) -- (0.4,-0.6);
\path [draw=none,fill=black] (0.4,-0.6) circle (0.1);
\draw [-] (0.1,-0.8) -- (0.4,-1.3);
\draw [-] (0.4,-0.6) -- (0.4,-1.3);
\path [draw=none,fill=black] (0.4,-1.3) circle (0.1);
\draw [-] (0.4,-1.3) -- (0.1,-1.8);
\path [draw=none,fill=black] (0.1,-1.8) circle (0.1);
\draw [-] (0.4,-1.3) -- (-0.1,-1.7);
\draw [-] (0.1,-1.8) -- (-0.1,-1.7);
\path [draw=none,fill=black] (-0.1,-1.7) circle (0.1);
\draw [-] (0.4,-1.3) -- (-0.2,-1.1);
\draw [-] (-0.1,-1.7) -- (-0.2,-1.1);
\draw [-] (0.1,-0.8) -- (-0.2,-1.1);
\path [draw=none,fill=black] (-0.2,-1.1) circle (0.1);

\end{tikzpicture}
\caption{Example of a map $f$ (arrows) from the original input space $X=\mathbb{R}^2$ (left) to the feature space $Z=\mathbb{R}^2$ (right). The data lies in $k=2$ clusters separated by margin greater than $\gamma$. The graph constructed from the unlabeled sample is shown, along with the regions formed by the union of the hypercubes associated with the points within each graph component.} \label{clusteR_Assumption_fig}
\vspace{-15pt}
\end{figure}
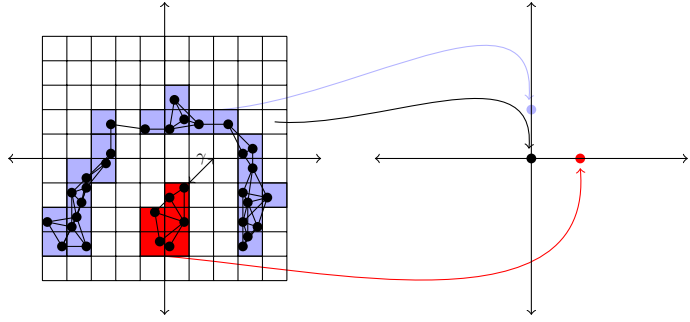

We present an illustrative example of the sufficient conditions for unsupervised representation learning. The first learns a representation from cluster structure and demonstrates the application of Theorem~\ref{risk_gap_theorem}. The second example in the supplement discusses manifold learning, and applies Theorem~\ref{sample_complexity_theorem}. In both examples we assume a bounded continuous input space $X=[0,1]^n$, a binary output space $Y=\{0,1\}$, and a zero/one loss function $L(y,y')=\mathbf{1}(y\neq y')$.

We sketch the strategy used for our proofs. We prove each condition individually with high probability and then take a union bound to show that with high probability all conditions hold (see Supplement \ref{high_probability_interpretation} for a discussion of these high probability statements). We divide the input space into hypercubes and run an algorithm to test finitely many properties within some property class, yielding the property test result $\hat{R}_a(P_X)$. Condition $A(P_X)$ is achieved by reducing the measurement of a property of $P_X$ to binary classification with a finite hypothesis class, allowing the use of standard finite hypothesis class generalization error bounds. Condition $B(P_{XY})$ is assumed, where a domain expert specifies shared structure between $P_X$ and $P_{XY}$ if the property test passes. Condition $C(f_L)$ is derived by designing the property test such that it checks that the feature learner $f_L$ will work on $P_X$.  Condition $D(h_L)$ is shown using the fact that, with high probability, if a large enough labeled sample is drawn from a finite number of bins, the total probability mass of bins containing no labeled points will not be too large (see Lemma \ref{alpha_lemma}). In the second example, condition $E(P_{XY})$ on the complexity of the joint distribution is also provided by a domain expert. Condition $F(P_X,h_L)$ is shown by adding the most favorable possible labels to the unlabeled sample $S_u$, running $h_L$ on this training set to determine the minimum empirical risk achievable by a hypothesis learned by $h_L$, and then using a standard VC dimension-based result to lower bound the risk of this hypothesis.

\subsection{Learning cluster representation improves risk}
\label{cluster}

We present an example where unsupervised representation learning provably reduces risk. The example uses cluster structure in the unlabeled data to learn a representation where each cluster is mapped to a one-hot code, as shown in Figure \ref{clusteR_Assumption_fig}.
Assuming that the hypothesis learner learns a linear separator and that points within clusters share labels, in the new representation low risk will be achieved. In the original input space, any linear separator will achieve risk greater than some strictly positive threshold if the clusters are not linearly separable. Hence it is possible to prove that representation learning offers a benefit, as formalized in Theorem \ref{cluster_theorem}, which instantiates Theorem \ref{risk_gap_theorem}. This result will be meaningful when the risk gap is positive (see Figure \ref{example_results_fig} for an example).

\begin{theorem}
\label{cluster_theorem}
Let $\hat{R}_a(P_X)$ be the result of the cluster property test described in Algorithm \ref{cluster_Algorithm} run on an unlabeled sample $S_u$. Let $s$ be a side length parameter and $k$ be the number of clusters (see Section \ref{cluster_cond_abcd}). Let $P_{XY}$ satisfy Assumptions \ref{cluster_PXY_Assumption} and \ref{cluster_PXY_Assumption2}, $f_L$ be the feature learner and $h_L$ be the hypothesis learner in Section \ref{cluster_cond_abcd}. Let $\beta$ be a lower bound on the empirical risk of a hypothesis learned by $h_L$ on a training set constructed by adding labels to $S_u$ according to some distribution $P_{XY}$ such that $R_B(P_{XY})\leq\epsilon_B$ and $R_E(P_{XY})\geq\epsilon_E$ (see Supplement \ref{cluster_PX_hL_lemma_proof}). Let $\epsilon_{\min}:=\beta-\sqrt{\frac{8(n+1)\log\frac{2em_u}{n+1}+8\log\frac{12}{\delta}}{m_u}}$ and $\epsilon^Z_{\max}:= \frac{1}{m_u}(s^{-n}\log2+\log\frac{3}{\delta}) + \underset{t \in [0,1]}{\max}(k+1-\frac{\delta}{3}(1-t)^{-m_l})t$.  Suppose $\hat{R}_a(P_X)=0$. Then if hypotheses $h$ and $h^Z \circ f$ are constructed from $S_u$ and a labeled sample $S_l$, with probability at least $1-\delta$, $\Delta R:=R(h)-R(h^Z \circ f) \geq \epsilon_{\min}-\epsilon_{\max}^Z$.
\end{theorem}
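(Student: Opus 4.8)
The plan is to obtain Theorem~\ref{cluster_theorem} as a direct instantiation of Theorem~\ref{risk_gap_theorem}: once the intermediate risk terms $R_a,R_A,R_B,R_C,R_E$ and the constants $\hat\epsilon_A,\epsilon_A,\epsilon_B,\epsilon_C,\epsilon_E,\epsilon_{\max}^Z,\epsilon_{\min}$ are fixed for the cluster setting, it suffices to verify that conditions $A$--$F$ of Table~\ref{conditions_table} each hold with their stated probabilities and to union-bound. I would take $\hat\epsilon_A=0$ (matching the hypothesis $\hat R_a(P_X)=0$), let $\epsilon_A,\epsilon_B,\epsilon_C$ be the small $P_X$- (resp.\ $P_{XY}$-) masses accumulated by the discretization and coverage arguments below, let $\epsilon_E$ be the separation-induced lower bound supplied by Assumption~\ref{cluster_PXY_Assumption2}, and split the failure budget as $\delta/3$ for the unlabeled-sample event, $\delta/3$ for the labeled-sample coverage event, and $\delta/3$ for the VC event --- which is precisely why $\tfrac{3}{\delta}$ and $\tfrac{12}{\delta}$ surface in $\epsilon^Z_{\max}$ and $\epsilon_{\min}$. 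A union bound over these three events then gives that $A$--$F$ hold simultaneously with probability at least $1-\delta$, and Theorem~\ref{risk_gap_theorem} finishes.

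For the upper-bound side (conditions $A,B,C,D$) I would proceed as in Section~\ref{cluster_cond_abcd}. Partition $X=[0,1]^n$ into the $s^{-n}$ hypercubes of side length $s$; the property test of Algorithm~\ref{cluster_Algorithm} records which hypercubes $S_u$ occupies, forms the adjacency graph of Figure~\ref{clusteR_Assumption_fig}, and checks that it has exactly $k$ components separated by margin $>\gamma$. Condition $A$ is obtained by recasting ``$P_X$ has the tested cluster structure'' as binary classification over the finite class of $\le 2^{s^{-n}}$ hypercube labelings, so that a realizable finite-class generalization bound yields $R_a(P_X),R_A(P_X)\le\epsilon_A$ with the $\tfrac1{m_u}\bigl(s^{-n}\log 2+\log\tfrac3\delta\bigr)$ term. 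Condition $B$ is then immediate from Assumption~\ref{cluster_PXY_Assumption} (points in a common cluster share a label), which propagates the same mass estimate to $P_{XY}$. Condition $C$ holds because the test was designed to certify that $f_L$ run on $S_u$ recovers the $k$ components and emits the corresponding one-hot map, the only residual error being the $P_X$-mass of cluster portions never witnessed by $S_u$, again controlled by that coverage term. Feeding $R_B\le\epsilon_B$ and $R_C(f)\le\epsilon_C$ into condition $D$: in the one-hot representation the $k$ clusters are linearly separable, so $h^Z\circ f$ errs only on a point whose cluster code never appears in $S^Z_l$; Lemma~\ref{alpha_lemma} bounds the $P_{XY}$-mass of such empty-bin regions among the $k+1$ relevant regions, and optimizing its threshold produces exactly the $\underset{t\in[0,1]}{\max}\bigl(k+1-\tfrac\delta3(1-t)^{-m_l}\bigr)t$ summand of $\epsilon^Z_{\max}$.

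For the lower-bound side, condition $E$ is supplied by Assumption~\ref{cluster_PXY_Assumption2}, which forces the clusters not to be linearly separable in $X$ and hence $R_E(P_{XY})\ge\epsilon_E$. Condition $F$ is the construction of Supplement~\ref{cluster_PX_hL_lemma_proof}: label $S_u$ in the most favourable admissible way (consistent with some $P_{XY}$ having $R_B\le\epsilon_B$, $R_E\ge\epsilon_E$), run the linear-separator learner $h_L$, read off that its empirical risk on that relabelled sample is at least $\beta$, and apply the standard VC bound with $d=n+1$ to conclude, with probability $\ge1-\delta/3$, that the resulting hypothesis has risk at least $\beta-\sqrt{\tfrac{8(n+1)\log\frac{2em_u}{n+1}+8\log\frac{12}{\delta}}{m_u}}=\epsilon_{\min}$, which transfers to the $h$ actually learned from $S_l$ because the relabelling was chosen adversarially in $h_L$'s favour. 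Combining with the three-way union bound and Theorem~\ref{risk_gap_theorem} gives $\Delta R\ge\epsilon_{\min}-\epsilon^Z_{\max}$ with probability at least $1-\delta$.

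The main obstacle I expect is condition $D$, not the bookkeeping: one must simultaneously control two error sources in $R(h^Z\circ f)$ --- misassignment of cluster membership by $f$, which needs $s$ small relative to the margin $\gamma$ and $m_u$ large enough to witness every cluster, and the $P_{XY}$-mass of cluster regions absent from $S_l$, which is the occupancy/balls-in-bins estimate of Lemma~\ref{alpha_lemma} --- and the threshold optimization there must be carried through carefully to land on the stated $\epsilon^Z_{\max}$. A secondary delicate point is verifying that the ``most favourable'' relabelling in condition $F$ yields a genuine, distribution-independent floor $\beta$ on $h_L$'s empirical risk while remaining compatible with the $\epsilon_B,\epsilon_E$ constraints; everything else reduces to quoting the conditions and a union bound.
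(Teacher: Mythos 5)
Your proposal takes essentially the same route as the paper: its proof of Theorem~\ref{cluster_theorem} is exactly the union bound you describe --- conditions $A$, $D$, $F$ each established with probability $1-\frac{\delta}{3}$ by Lemmas~\ref{cluster_PX_theorem}, \ref{cluster_hL_theorem} and \ref{cluster_PX_hL_lemma}, conditions $B$, $C$, $E$ supplied deterministically by Assumption~\ref{cluster_PXY_Assumption}, Lemma~\ref{clusteR_FL_theorem} and Assumption~\ref{cluster_PXY_Assumption2} --- followed by an application of Theorem~\ref{risk_gap_theorem}, and your per-condition sketches match the supplementary lemma proofs. One minor mischaracterization that does not affect the argument: Assumption~\ref{cluster_PXY_Assumption2} does not ``force the clusters not to be linearly separable''; it only excludes a near-constant labelling ($R_E(P_{XY})=P(y=1)\geq\epsilon_E$), and the non-separability floor is carried entirely by the empirical quantity $\beta$ in condition $F$.
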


\subsection{Conditions $A(P_X), B(P_{XY}), C(f_L), D(h_L)$}
\label{cluster_cond_abcd}

We consider a property which measures the extent to which $P_X$ is concentrated on disjoint clusters and describe an algorithm for testing whether this property approximately holds from a finite unlabeled sample. The quantity $R_A(P_X) \in [0,1]$, defined below, describes the extent to which the property holds, with $R_A(P_X)=0$ indicating that the property perfectly holds.

Given a distribution $P_X$ and a hypercube side length parameter $s$, let $\mathcal{X}_A$ be the set of all sets $X_a$ composed of disjoint regions and for which every point in a region included in $X_a$ is near some point supported by $P_X$ (see Supplement \ref{cluster_PX_theorem_proof} for a formal definition).
For some set $X_a$, let $k=|X_a|$, $L_a(x):=\mathbf{1}(x \not\in \bigcup\limits_{X_i \in X_a}X_i)$ and $\hat{R}_a(P_X)$ be the result of the property test described in Algorithm \ref{cluster_Algorithm}, where if $\hat{R}_a(P_X)=0$ then $L_a(x)=0$ for all $x \in S_u$. Let $R_a(P_X):=\mathbb{E}_{x \sim P_X}[L_a(x)]$ and $R_A(P_X):=\underset{X_a \in \mathcal{X}_A}{\min}R_a(P_X)$.

\begin{lemma}
\label{cluster_PX_theorem}
Let $S_u$ be a sample drawn from $P_X$ and let $\hat{R}_a(P_X)$ be calculated using $S_u$ and the property test described in Algorithm \ref{cluster_Algorithm}. Let $\hat{\epsilon}_A:=0$ and $\epsilon_A:=\frac{1}{m_u}(s^{-n}\log2+\log\frac{3}{\delta})$. With probability at least $1-\frac{\delta}{3}$, $\hat{R}_a(P_X)\leq\hat{\epsilon}_A \implies R_a(P_X)\leq\epsilon_A \wedge R_A(P_X)\leq\epsilon_A$.
\end{lemma}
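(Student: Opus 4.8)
The plan is to recognise the lemma as an instance of the standard finite-hypothesis-class generalization bound, applied to the family of candidate cluster regions. By the formal construction of $\mathcal{X}_A$ in Supplement~\ref{cluster_PX_theorem_proof}, each $X_a\in\mathcal{X}_A$ is a union of grid hypercubes of side $s$, so the associated labelling function $L_a(x)=\mathbf{1}(x\notin\bigcup_{X_i\in X_a}X_i)$ is constant on each of the $s^{-n}$ hypercubes tiling $X=[0,1]^n$. Consequently the induced class $\mathcal{H}:=\{L_a:X_a\in\mathcal{X}_A\}$ contains at most $2^{s^{-n}}$ distinct functions, regardless of how the (uncountably many) underlying regions differ off the grid. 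Under this identification $R_a(P_X)=\mathbb{E}_{x\sim P_X}[L_a(x)]$ is the probability that a fresh draw falls outside the regions, and the event $\hat{R}_a(P_X)=0$ is precisely the statement that the $L_a$ returned by Algorithm~\ref{cluster_Algorithm} labels every point of $S_u$ as $0$, i.e.\ has zero empirical error in the ``every point lies in a cluster'' sense.

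First I would fix an arbitrary $L_a\in\mathcal{H}$ with $R_a(P_X)>\epsilon_A$. Since the $m_u$ points of $S_u$ are drawn i.i.d.\ from $P_X$, the probability that all of them land inside $\bigcup_{X_i\in X_a}X_i$ is $(1-R_a(P_X))^{m_u}\le(1-\epsilon_A)^{m_u}\le e^{-\epsilon_A m_u}$, using $1-x\le e^{-x}$. A union bound over the at most $2^{s^{-n}}$ members of $\mathcal{H}$ then shows that the probability that \emph{some} $L_a$ with $R_a(P_X)>\epsilon_A$ labels all of $S_u$ as $0$ is at most $2^{s^{-n}}e^{-\epsilon_A m_u}$. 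Substituting $\epsilon_A=\frac{1}{m_u}(s^{-n}\log 2+\log\frac{3}{\delta})$ makes this bound exactly $\frac{\delta}{3}$.

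It then follows that, with probability at least $1-\frac{\delta}{3}$, every $L_a\in\mathcal{H}$ that labels all of $S_u$ as $0$ has $R_a(P_X)\le\epsilon_A$. On that event: if $\hat{R}_a(P_X)\le\hat{\epsilon}_A=0$, then the $X_a$ chosen by the property test has this zero-labelling property, so $R_a(P_X)\le\epsilon_A$; and since $R_A(P_X)=\min_{X_a\in\mathcal{X}_A}R_a(P_X)$ is a minimum ranging over a set that includes this particular $X_a$, we get $R_A(P_X)\le\epsilon_A$ as well. This establishes the desired implication $\hat{R}_a(P_X)\le\hat{\epsilon}_A\implies R_a(P_X)\le\epsilon_A\wedge R_A(P_X)\le\epsilon_A$ with the stated probability.

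The one step I would be most careful about is the cardinality bound $|\mathcal{H}|\le 2^{s^{-n}}$: it hinges on the Supplement's definition of $\mathcal{X}_A$ (and on Algorithm~\ref{cluster_Algorithm}) only producing regions that are unions of the $s^{-n}$ fixed grid hypercubes, so that the labelling functions $L_a$ take finitely many values even though the regions themselves form an infinite family; the ``near a supported point'' clause only restricts this finite class further and does not affect the upper bound. Beyond that, the argument is the textbook Occam's razor computation with class size $2^{s^{-n}}$, sample size $m_u$, and failure probability $\frac{\delta}{3}$, so I would not anticipate further obstacles.
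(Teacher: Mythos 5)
Your proposal is correct and follows essentially the same route as the paper: both reduce the statement to the finite-hypothesis-class bound over the at most $2^{s^{-n}}$ labelling functions induced by unions of grid hypercubes, use the fact that a passing test means the returned $X_a$ has zero empirical error, and pass from $R_a$ to $R_A$ via the minimum. The only difference is that you inline the Occam's-razor calculation that the paper delegates to Lemma~\ref{finite_lemma}, and you make explicit the same implicit reliance on $\mathcal{X}_A$ consisting of hypercube unions that the paper's cardinality claim rests on.
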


We adopt a variant of the cluster assumption, which has previously been used in the context of semi-supervised learning \citep{rigollet_generalization_2007,singh_unlabeled_2009}. We assume that nearby points share labels, given that the property test for clusteredness passes (see Section \ref{cluster_cond_abcd}). We set $\epsilon_B:=0$, indicating strict within-cluster label agreement, but envisage relaxing this assumption in future work.

Let $\gamma:=\frac{s}{\sqrt{n}}$ be a cluster separation parameter. Let $d_\gamma$ be a distance function such that $d_\gamma(x_0,x_G)=0$ if there is some path $x_0, \dots , x_G$ such that $\bigwedge\limits_{i=0}^{G-1}(||x_i-x_{i+1}||_2\leq \gamma) \wedge \bigwedge\limits_{i=0}^{G}(p(x_i)>0)$, otherwise $d_\gamma(x_0,x_G)=1$. Let $R_B(P_{XY}):=\mathbb{E}_{\{x,y\},\{x',y'\} \sim P_{XY}}[L_B(\{x,y\},\{x',y'\})]$, where $L_B(\{x,y\},\{x',y'\}):=\mathbf{1}(d_\gamma(x,x')=0)\mathbf{1}(y\neq y')$.

\begin{assumption}
\label{cluster_PXY_Assumption}
Let $\epsilon_B:=0$. Assume $R_A(P_X)\leq\epsilon_A \implies R_B(P_{XY})\leq\epsilon_B$.
\end{assumption}

Having established that the marginal distribution $P_X$ has the property that the data lies in clusters, and making the assumption that points within clusters share labels, we now state the condition that the unsupervised feature learner $f_L$ exploits this cluster structure. We design the learner to produce a feature map $f$ which maps all points within a cluster to the same point in the feature space.

Define $f_L$ as follows, yielding the feature map $f$. Run the property test described in Algorithm \ref{cluster_Algorithm}, which we assume passes and returns the set $X_a$. Set $k=|X_a|$. For $X_i \in X_a$, for all $x \in X_i$ set $f(x)$ to be a $k$-dimensional vector whose $i$th position is 1, and whose other positions are 0. For those points $x \in X$ for which $x \not\in X_i$ for all $X_i \in X_a$, set $f(x)$ to be the zero vector. Let $R_C(f):=\mathbb{E}_{x \sim P_X}[L_C(x,f)]$, where $L_C(x,f):=\underset{\{x': f(x)=f(x')\}}{\max}d_\gamma(x,x')$.

\begin{lemma}
\label{clusteR_FL_theorem}
Let $\epsilon_C:=\epsilon_A$. $R_a(P_X)\leq \epsilon_A \implies R_C(f)\leq\epsilon_C$.
\end{lemma}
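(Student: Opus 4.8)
The plan is to establish the stronger pointwise inequality $L_C(x,f)\le L_a(x)$ for $P_X$-almost every $x$, and then integrate: this yields $R_C(f)=\mathbb{E}_{x\sim P_X}[L_C(x,f)]\le\mathbb{E}_{x\sim P_X}[L_a(x)]=R_a(P_X)\le\epsilon_A=\epsilon_C$, which is exactly the claim, so the whole lemma reduces to monotonicity of the expectation plus this pointwise bound. Since both $L_a$ and $L_C(\cdot,f)$ are $\{0,1\}$-valued, the pointwise inequality is only in question when $L_a(x)=0$, i.e.\ when $x\in X_i$ for some region $X_i$ in the set $X_a$ that Algorithm~\ref{cluster_Algorithm} returned and that is used to define both $f$ and $R_a$. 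So it suffices to show: if $x\in\operatorname{supp}(P_X)$ lies in a returned region $X_i$, then $d_\gamma(x,x')=0$ for every $x'$ with $f(x')=f(x)$.

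Next I would unpack the definition of $f_L$. Because $f$ assigns the $i$-th standard basis vector to every point of $X_i$ and the zero vector to every point contained in no region of $X_a$, and the regions are pairwise disjoint, the fibre $\{x':f(x')=f(x)\}$ of a point $x\in X_i$ is exactly $X_i$. Hence the statement to prove collapses to a purely geometric assertion about the returned regions: every $X_i$ is $\gamma$-path-connected through points of positive density, where $\gamma=s/\sqrt{n}$. Given that, $L_C(x,f)=\max_{x'\in X_i}d_\gamma(x,x')=0$ for $x\in X_i$, and we are done.

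This geometric assertion is where the real work lies, and it is supplied by the construction of $\mathcal{X}_A$ and the property test detailed in Supplement~\ref{cluster_PX_theorem_proof}: each region $X_i$ is a union of side-$s$ hypercubes, each meeting the support of $P_X$, and the hypercubes of a single region were grouped together precisely because their associated sample points form one connected component of the graph whose edges join points at distance at most $\gamma$. I would chain these facts into an explicit path: step from $x$ to a support point in its hypercube, hop between the sample points of $X_i$ along graph edges (each hop a $\gamma$-step between positive-density points), and step out to $x'$; the choice $\gamma=s/\sqrt{n}$ is what makes consecutive hypercubes of a component close enough for these hops to exist while keeping clusters separated by margin greater than $\gamma$ from ever being merged.

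I expect the one genuinely delicate point to be the positive-density requirement built into $d_\gamma$ at the endpoints as well as the interior of the path. The contributing $x$ lie in $\operatorname{supp}(P_X)$ by definition of the expectation, and the ``near some point supported by $P_X$'' clause in the definition of $\mathcal{X}_A$ lets the interior path points be chosen in the support; the subtle case is an $x'$ in the fibre that itself has zero density, which would force $d_\gamma(x,x')=1$. I would verify against the formal definition in Supplement~\ref{cluster_PX_theorem_proof} that the regions returned by the property test exclude such points (or, equivalently, that the effective fibre over which the maximum in $L_C$ ranges is confined to the support), which is presumably arranged by the same hypercube/support-intersection construction. Modulo this geometric lemma the result is immediate, so the obstacle is entirely in pinning down the region-construction conventions, not in any quantitative estimate.
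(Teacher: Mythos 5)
Your proof takes essentially the same route as the paper's: the paper's entire argument is that, by the construction of $f_L$, $L_C(x,f)=0$ for every $x$ with $f(x)$ not the zero vector, that these are precisely the points with $L_a(x)=0$, and hence $R_C(f)\le R_a(P_X)\le\epsilon_A$ --- i.e., your pointwise domination $L_C(\cdot,f)\le L_a(\cdot)$ plus monotonicity of expectation, with the geometric path-construction left implicit. The delicate point you flag (a zero-density $x'$ in the fibre $X_i$ would force $d_\gamma(x,x')=1$, since the definition of $d_\gamma$ demands $p(x_i)>0$ at the endpoints of the path as well as in its interior) is genuine and is not resolved by the paper's proof either; one has to read the $\max$ in $L_C$ as implicitly restricted to supported points, or restrict $X_i$ to the support, for the asserted $L_C(x,f)=0$ to hold.
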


Let $h_L$ be a learner which conducts empirical risk minimization over all linear classifiers of type $X \to Y$, using the labeled sample $S_l$. Let $h_L^Z$ be a learner which conducts empirical risk minimization over all linear classifiers of type $Z \to Y$, using the transformed labeled sample $S_l^Z$, yielding the hypothesis $h_L^Z \circ f$. Note that the bound on $R(h^Z \circ f)$ shown is independent of $P_{XY}$ given $R_B(P_{XY})\leq\epsilon_B$ and $R_C(f)\leq\epsilon_C$.

\begin{lemma}
\label{cluster_hL_theorem}
Let $\epsilon_B:=0$ and $\epsilon_{\max}^Z:=\epsilon_C + \underset{t \in [0,1]}{\max}(k+1-\frac{\delta}{3}(1-t)^{-m_l})t$.
With probability at least $1-\frac{\delta}{3}$, $R_B(P_{XY})\leq\epsilon_B \wedge R_C(f)\leq\epsilon_C \implies R(h^Z \circ f)\leq\epsilon_{\max}^Z$.
\end{lemma}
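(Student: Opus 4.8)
The plan is to exploit that the feature map $f$ takes only $k+1$ distinct values --- the one-hot codes $e_1,\dots,e_k$ together with the zero vector --- which, viewed as points of $Z=\mathbb{R}^k$, are affinely independent (they are the vertices of a simplex). Consequently the class of linear classifiers on $Z$ realizes every one of the $2^{k+1}$ labelings of these points, so empirical risk minimization decouples into $k+1$ independent \emph{per-bin} decisions: $h^Z(v)$ is set to a majority label among the training points of $S_l^Z$ falling in the bin $f^{-1}(v)$, with ties broken arbitrarily. The analysis then reduces to a finite, tabular learning problem over $k+1$ bins, and I only need to control (i) how much $P_X$-mass sits in ``bad'' bins whose label ERM cannot be trusted to get right, and (ii) how much mass sits in bins that receive no training point.

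For (i), I would first use the hypothesis $R_C(f)\le\epsilon_C$: since $L_C(x,f)\in\{0,1\}$, this says the $P_X$-mass of points $x$ such that some $x'$ with $f(x')=f(x)$ lies in a different $\gamma$-connected component is at most $\epsilon_C$. A short case analysis then shows each bin is ``pure'': either all of its mass is exceptional (it contains no point $x$ with $L_C(x,f)=0$), in which case the total such mass over all bins is $\le\epsilon_C$; or the bin coincides, up to a $P_X$-null set, with a single $\gamma$-connected component, and then by $R_B(P_{XY})\le\epsilon_B=0$ the label is $P_{XY}$-a.s.\ a deterministic constant $y_v$ on that bin. For a pure bin $v$ containing at least one training point, all its training points carry the label $y_v$, so ERM's per-bin minimizer forces $h^Z(v)=y_v$ and this bin contributes $0$ to $R(h^Z\circ f)$. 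Hence $R(h^Z\circ f)$ is at most $\epsilon_C$ (from the exceptional mass) plus the total $P_X$-mass of pure bins that are empty in $S_l^Z$.

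For (ii), I would invoke Lemma~\ref{alpha_lemma} applied to the $k+1$ bins and the labeled sample of size $m_l$: with probability at least $1-\tfrac{\delta}{3}$ over the draw of $S_l$, the total probability mass of bins containing no training point is at most $\max_{t\in[0,1]}\bigl(k+1-\tfrac{\delta}{3}(1-t)^{-m_l}\bigr)t$. Combining with the bound from (i) gives $R(h^Z\circ f)\le\epsilon_C+\max_{t\in[0,1]}\bigl(k+1-\tfrac{\delta}{3}(1-t)^{-m_l}\bigr)t=\epsilon_{\max}^Z$ on that event; when the antecedent $R_B(P_{XY})\le\epsilon_B\wedge R_C(f)\le\epsilon_C$ fails the implication is vacuous, so the stated high-probability implication holds. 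All of the probability is spent in Lemma~\ref{alpha_lemma}; the rest is deterministic given the assumed conditions.

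The main obstacle is the bookkeeping in step (i): I must check carefully that a bin cannot simultaneously carry a non-negligible ``clean'' sub-population (which pins its label $y_v$) and enough exceptional mass to let ERM pick the wrong label on a \emph{non-empty} bin, and that the zero-vector bin is handled correctly --- it is a genuine bin, but because the $k+1$ feature vectors are affinely independent the linear class can label it independently of the others, so it falls under the same per-bin argument rather than contributing an extra additive term. A secondary point is matching constants: confirming that Lemma~\ref{alpha_lemma} instantiated with $k+1$ bins and failure budget $\delta/3$ yields exactly the term $\max_{t\in[0,1]}\bigl(k+1-\tfrac{\delta}{3}(1-t)^{-m_l}\bigr)t$ appearing in $\epsilon_{\max}^Z$, rather than a looser union-bound surrogate.
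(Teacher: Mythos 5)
Your proposal is correct and follows essentially the same route as the paper's proof: both decompose the error into the $P_X$-mass of points whose bin violates the $d_\gamma$-purity guaranteed up to $\epsilon_C$ by $R_C(f)$, plus the mass of the $k+1$ bins left empty by $S_l$, the latter controlled by Lemma~\ref{alpha_lemma} with budget $\delta/3$, using $R_B(P_{XY})=0$ and the fact that linear classifiers shatter the $k+1$ affinely independent feature values to conclude ERM labels each occupied pure bin correctly. Your per-bin majority-vote bookkeeping is in fact slightly more careful than the paper's blanket claim of zero training error, but the argument and the resulting bound are the same.
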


\subsection{Conditions $E(P_{XY}), F(P_X,h_L)$}
\label{cluster_cond_ef}

We provide a lower bound on the risk of learning in the original input space rather than the learned representation. For this lower bound to be meaningful, we require that the joint distribution is `difficult' to learn in some way. In particular, we assume that there is no label which is correct for almost all inputs.

Without loss of generality assume $P_{XY}$ has the property $p(y=0)\geq p(y=1)$. Let $R_E(P_{XY})=\mathbb{E}_{\{x,y\} \sim P_{XY}}[L_E(\{x,y\})]$, where $L_E(\{x,y\})=y$.

\begin{assumption}
\label{cluster_PXY_Assumption2}
Let $\epsilon_E\in(0,\frac{1}{2}]$ be specified by a domain expert. Assume that $R_E(P_{XY})\geq\epsilon_E$.
\end{assumption}

Recall that $h_L$ is a learner conducting empirical risk minimization on the labeled sample $S_l$ over the set $H$ of linear classifiers of type $X \to Y$, yielding the hypothesis $h$. Assume that $h_L$ is guaranteed to select the hypothesis in $H$ with the minimum empirical risk. We now provide a high probability lower bound on $R(h)$, which is also dependent on $P_X$, and will be meaningful when it is greater than zero. Note that this bound is independent of the number of labeled examples $m_l$.

\begin{lemma}
\label{cluster_PX_hL_lemma}
Let $\epsilon_B:=0$, $\epsilon_E\in(0,\frac{1}{2}]$ be specified by a domain expert, $S_u$ be a sample from $P_X$ and $\beta$ be a lower bound on the empirical risk of a hypothesis learned by $h_L$ on a training set constructed by adding labels to $S_u$ according to some distribution $P_{XY}$ such that $R_B(P_{XY})\leq\epsilon_B$ and $R_E(P_{XY})\geq\epsilon_E$ (see Supplement \ref{cluster_PX_hL_lemma_proof}). Let $\epsilon_{\min}:=\beta-\sqrt{\frac{8(n+1)\log\frac{2em_u}{n+1}+8\log\frac{12}{\delta}}{m_u}}$. With probability at least $1-\frac{\delta}{3}$, $\int_yP_{XY}dy=P_X \wedge R_E(P_{XY})\geq\epsilon_E \wedge R_B(P_{XY})\leq\epsilon_B \implies R(h)\geq \epsilon_{\min}$.
\end{lemma}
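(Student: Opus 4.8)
The plan is to eliminate the dependence on the labeled sample $S_l$ at the outset. Since $h_L$ performs empirical risk minimization over the class $H$ of affine linear classifiers $X\to Y$, its output $h$ always lies in $H$, so for every draw of $S_l$ we have $R(h)\ge \min_{h'\in H}R(h')=:R_H^\star$. It therefore suffices to show $R_H^\star\ge\epsilon_{\min}$ with probability at least $\tfrac{\delta}{3}$-uniformly, i.e.\ with probability at least $1-\tfrac{\delta}{3}$ over the draw of $S_u$; this is exactly what makes the final bound independent of $m_l$ and what puts $m_u$ (not $m_l$) inside the confidence term.

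To relate the distribution-level quantity $R_H^\star$ to the empirically defined $\beta$, I would introduce an auxiliary labeled sample $\tilde S_u$ formed by attaching to each $x_i\in S_u$ a label $\tilde y_i\sim P_{Y\mid X=x_i}$. Because $\int_y P_{XY}\,dy=P_X$, the sequence $\tilde S_u$ is i.i.d.\ from $P_{XY}$; and because the true $P_{XY}$ satisfies $R_B(P_{XY})\le\epsilon_B=0$ and $R_E(P_{XY})\ge\epsilon_E$, the realized labeling of $S_u$ is (almost surely) one of the labelings admitted in the infimum defining $\beta$ --- here one uses that $R_B(P_{XY})=0$ forces $\gamma$-path-connected points of the support to carry the same label, so the sample labels are cluster-consistent with probability one. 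Hence $\beta\le\min_{h'\in H}\hat R_{\tilde S_u}(h')$, where $\hat R_{\tilde S_u}$ denotes empirical risk on $\tilde S_u$.

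Next I would apply a standard VC-dimension uniform convergence bound to $H$: affine linear classifiers on $[0,1]^n$ have VC dimension $n+1$, so with probability at least $1-\tfrac{\delta}{3}$ over $\tilde S_u$ one has $\sup_{h'\in H}\bigl(\hat R_{\tilde S_u}(h')-R(h')\bigr)\le \sqrt{\tfrac{8(n+1)\log\frac{2em_u}{n+1}+8\log\frac{12}{\delta}}{m_u}}=\beta-\epsilon_{\min}$. Chaining the inequalities, on this event $\beta\le\min_{h'}\hat R_{\tilde S_u}(h')\le \min_{h'}R(h')+(\beta-\epsilon_{\min})=R_H^\star+\beta-\epsilon_{\min}$, so $R_H^\star\ge\epsilon_{\min}$, and therefore $R(h)\ge R_H^\star\ge\epsilon_{\min}$. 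Since the event $\{\beta\le R_H^\star+(\beta-\epsilon_{\min})\}$ is a function of $S_u$ alone, its $(1-\tfrac{\delta}{3})$-probability under the coupling $(S_u,\tilde y)$ transfers to the draw of $S_u$, giving the stated conclusion.

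The step I expect to be the real obstacle is not a computation but making the definition of $\beta$ precise and, with it, the claim that the true-conditional labeling $\tilde S_u$ lies in the admissible set over which $\beta$ is an infimum: one must carefully formalise ``labelings of $S_u$ consistent with some $P_{XY}$ satisfying $R_B\le\epsilon_B\wedge R_E\ge\epsilon_E$'' (note that dropping the constraints would make $\beta=0$ and the bound vacuous) and verify that the event of a cluster-inconsistent draw has probability zero. The remaining ingredients --- the ERM containment $R(h)\ge R_H^\star$, the i.i.d.\ property of $\tilde S_u$, the plug-in of the VC bound, and the transfer of the probability back to $S_u$ --- are routine.
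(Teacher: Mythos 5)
Your proposal is correct and takes essentially the same route as the paper: both arguments use that the true labeling of $S_u$ is admissible under $R_B(P_{XY})=0$ and $R_E(P_{XY})\ge\epsilon_E$ (so $\beta$ lower-bounds the empirical risk of every $h\in H$ on the truly-labeled $S_u$) and then invoke the one-sided VC uniform-deviation bound with $d=n+1$ at confidence $\tfrac{\delta}{3}$, your detour through $\min_{h'\in H}R(h')$ being only a cosmetic rearrangement of the paper's direct application of the bound to the learned $h$. The one step you defer --- making $\beta$ concrete --- is where the paper's proof spends most of its effort: it constructs $\beta$ as the minimum over the $\tfrac12 k(k-1)$ pairwise cluster labelings $S_u^{ij}$ of the ERM empirical risk and shows this is at most $\hat R(h)$ for every admissible labeling, precisely because $R_B=0$ forces within-cluster label agreement and $R_E\ge\epsilon_E>0$ forces some pair of clusters to carry opposite labels.
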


We have now stated all lemmas used by the proof of Theorem \ref{cluster_theorem}, which is the main result for the cluster representation example.

\section{Conclusion}

We have developed a theory that explains when unsupervised pre-training works --- a set of sufficient conditions which with high probability imply that a change of representation will improve task performance. These conditions depend jointly upon structural properties of the distribution, the feature learner, the subsequent supervised learner and the loss function used.
We instantiated the conditions on an example which exploits the cluster structure, with a second example on manifold structure in the Supplement.
Our approach of analysing a processing pipeline seems novel, where it seems necessary to consider risk gaps instead of sample complexity.

The modular nature of our argument is a central feature; it breaks an obscure black-box into understandable components.  And furthermore, the instantiation of each component, for a particular problem, reduces to relatively standard application of existing techniques.

\small
\bibliographystyle{alpha}
\bibliography{feature-learning}

\newcommand{\etalchar}[1]{$^{#1}$}
\begin{thebibliography}{MSC{\etalchar{+}}13}

\bibitem[BB10]{balcan_discriminative_2010}
{Maria-Florina} Balcan and Avrim Blum.
\newblock A discriminative model for semi-supervised learning.
\newblock {\em Journal of the ACM}, 57(3):19, 2010.

\bibitem[BCV13]{bengio_representation_2013}
Yoshua Bengio, Aaron Courville, and Pascal Vincent.
\newblock Representation {Learning}: {A} {Review} and {New} {Perspectives}.
\newblock {\em IEEE Transactions on Pattern Analysis and Machine Intelligence},
  35(8):1798--1828, 2013.

\bibitem[BRS12]{bijral_semi-supervised_2012}
Avleen~S. Bijral, Nathan Ratliff, and Nathan Srebro.
\newblock Semi-supervised learning with density based distances.
\newblock In {\em Uncertainty in {Artificial} {Intelligence} 2011
  {Proceedings}}, 2012.

\bibitem[CT12]{coveR_Elements_2012}
Thomas~M. Cover and Joy~A. Thomas.
\newblock {\em Elements of Information Theory}.
\newblock John Wiley \& Sons, 2012.

\bibitem[EBC{\etalchar{+}}10]{erhan_why_2010}
Dumitru Erhan, Yoshua Bengio, Aaron Courville, Pierre-Antoine Manzagol, Pascal
  Vincent, and Samy Bengio.
\newblock Why {Does} {Unsupervised} {Pre}-training {Help} {Deep} {Learning}?
\newblock {\em Journal of Machine Learning Research}, 11:625--660, 2010.

\bibitem[HS06]{hinton_reducing_2006}
Geoffrey~E. Hinton and Ruslan~R. Salakhutdinov.
\newblock Reducing the dimensionality of data with neural networks.
\newblock {\em Science}, 313(5786):504--507, 2006.

\bibitem[JL84]{johnson_extensions_1984}
William~B. Johnson and Joram Lindenstrauss.
\newblock Extensions of {Lipschitz} mappings into a {Hilbert} space.
\newblock {\em Contemporary Mathematics}, 26(189-206):1, 1984.

\bibitem[LBH15]{lecun_deep_2015}
Yann LeCun, Yoshua Bengio, and Geoffrey Hinton.
\newblock Deep learning.
\newblock {\em Nature}, 521(7553):436 -- 444, 2015.

\bibitem[MRT12]{mohri_foundations_2012}
Mehryar Mohri, Afshin Rostamizadeh, and Ameet Talwalkar.
\newblock {\em Foundations of Machine Learning}.
\newblock MIT press, 2012.

\bibitem[MSC{\etalchar{+}}13]{mikolov_distributed_2013}
Tomas Mikolov, Ilya Sutskever, Kai Chen, Greg~S. Corrado, and Jeff Dean.
\newblock Distributed representations of words and phrases and their
  compositionality.
\newblock In {\em Advances in Neural Information Processing Systems}, pages
  3111--3119, 2013.

\bibitem[Rig07]{rigollet_generalization_2007}
Philippe Rigollet.
\newblock Generalization {Error} {Bounds} in {Semi}-supervised {Classification}
  {Under} the {Cluster} {Assumption}.
\newblock {\em Journal of Machine Learning Research}, 8:1369--1392, 2007.

\bibitem[SJG{\etalchar{+}}15]{sutskever_towards_2015}
Ilya Sutskever, Rafal Jozefowicz, Karol Gregor, Danilo Rezende, Tim Lillicrap,
  and Oriol Vinyals.
\newblock Towards {Principled} {Unsupervised} {Learning}.
\newblock {\em arXiv:1511.06440 [cs]}, 2015.

\bibitem[SMG14]{saxe_exact_2014}
Andrew~M. Saxe, James~L. McClelland, and Surya Ganguli.
\newblock Exact solutions to the nonlinear dynamics of learning in deep linear
  neural networks.
\newblock In {\em International {Conference} on {Learning} {Representations}},
  2014.

\bibitem[SNZ09]{singh_unlabeled_2009}
Aarti Singh, Robert Nowak, and Xiaojin Zhu.
\newblock Unlabeled data: {Now} it helps, now it doesn't.
\newblock In {\em Advances in {Neural} {Information} {Processing} {Systems}
  21}, pages 1513--1520. 2009.

\bibitem[Vap13]{vapnik_nature_2013}
Vladimir Vapnik.
\newblock {\em The {Nature} of {Statistical} {Learning} {Theory}}.
\newblock Springer Science \& Business Media, 2013.

\bibitem[vRW15]{van_rooyen_theory_2015}
Brendan van Rooyen and Robert~C. Williamson.
\newblock A {Theory} of {Feature} {Learning}.
\newblock {\em arXiv preprint arXiv:1504.00083}, 2015.

\end{thebibliography}
\normalsize

\newpage
\appendix

\section*{Supplementary material:\\
Learning features to improve task performance}

\section{Motivation for and technical discussion of proposed sufficient conditions}
\label{motivation_technical_discussion}

We sketch the motivation for the sufficient conditions proposed for unsupervised representation learning, as well as technical aspects of these conditions which were omitted from the main text due to space requirements.

\subsection{Motivation for sufficient conditions}
\label{motivation}

The process for evaluating the effect of unsupervised representation learning is described in Figure \ref{flow_chart}. By identifying source nodes which represent independent elements of the prediction context, we associate a condition with each such element, as stated in Table \ref{conditions_table}. Thus, although we do not formally demonstrate that the conditions are \textit{necessary} for unsupervised representation learning, it appears unlikely that the conditions can be reduced further.

Furthermore, the modular structure of the conditions means that the task of determining the effect of representation learning is `factorized' over the different elements of the prediction context, making analysis more tractable. Given an unlabeled data sample, a feature learner, a supervised learner, and suitable assumptions about shared structure between the marginal distribution $P_X$ and the joint distribution $P_{XY}$, it is possible to assess the effect of representation learning with high probability. Note that while $P_X$ is not independent of $P_{XY}$, we treat $P_X$ as an independent element because $P_{XY}$ is unknown, and treat the assumption of shared structure in $P_{XY}$ as a separate independent element.

The conditions imply that the value of a particular representation will depend upon on the hypothesis learner. A representation which improves the performance of a particular hypothesis learner --- for example, empirical risk minimization over the class of linear separators --- may hinder the performance of another learner which can effectively learn in the original input space. Furthermore, abstracting away from a particular hypothesis learner to consider only the Bayes optimal classifier demonstrates that representation learning never helps in this setting \citep{van_rooyen_theory_2015}. Consequently, it appears necessary to include dependence on a subsequent hypothesis learner when defining `good' features.

It is clear that the values of $R(h^Z \circ f)$ and $R(h)$ also depend on the loss function $L$, and hence so do the conditions $D$ and $F$. For brevity we suppress this in the notation used in Table \ref{conditions_table} and Figure \ref{flow_chart}. We also note that there is typically a connection between the loss function and the hypothesis learner. The learner will typically optimize parameters according to the loss function, or some variant of the loss function that makes optimization easier and/or introduces regularization.

We propose an upper bound on $R(h^Z \circ f)$ that is independent of $P_{XY}$ given that $R_B(P_{XY})\leq\epsilon_B$ and $R_C(f)\leq\epsilon_C$ (see Table \ref{conditions_table}). This means that the bound applies to a wide class of distributions and therefore has the advantage of being quite general. It is also somewhat loose, as it does not consider $P_X$ itself, and a tighter bound which includes dependence on $P_X$ may be possible. On the other hand, to achieve a meaningful lower bound on $R(h)$ we require dependence on $P_X$ since it is likely that for some choices of $P_X$, $R(h)$ is zero or small. This is the reason that condition $F$, which is associated with $h_L$ as shown in Figure \ref{flow_chart}, also depends on $P_X$.

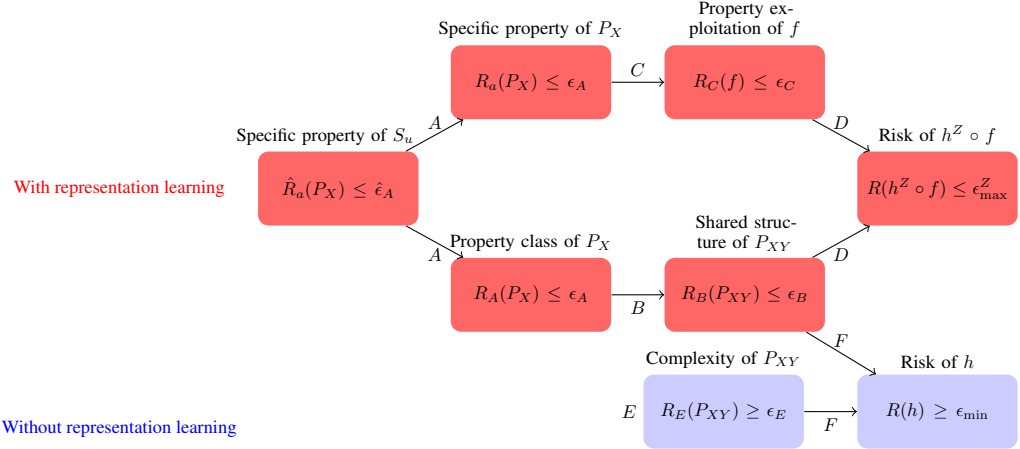
\begin{figure}
\centering

\begin{tikzpicture}[scale=0.7, transform shape]
\tikzstyle{cloud} = [draw, ellipse, node distance=2.5cm,
minimum width=5cm]
\tikzstyle{a} = [rectangle, draw=none, fill=blue!20,
text width=8em, text centered, rounded corners, minimum height=4em]
\tikzstyle{b} = [rectangle, draw=none, fill=red!60,
text width=8em, text centered, rounded corners, minimum height=4em]

\tikzstyle{line} = [draw, -latex']
\node [b] (R_a_emp) {$\hat{R}_a(P_X)\leq\hat{\epsilon}_A$};
\node[b,above right = 0.6cm and 0.6cm of R_a_emp] (R_a) {$R_a(P_X)\leq\epsilon_A$};
\node[b,below right = 0.6cm and 0.6cm of R_a_emp] (R_A) {$R_A(P_X)\leq\epsilon_A$};
\node[b,right = 1cm of R_A] (R_B) {$R_B(P_{XY})\leq\epsilon_B$};
\node[b,right = 1cm of R_a] (R_C) {$R_C(f)\leq\epsilon_C$};
\node[b,above right = 0.6cm and 0.6cm of R_B] (R_h_z) {$R(h^Z \circ f)\leq\epsilon^Z_{\max}$};
\node[a,below right = 0.8cm and 0.6cm of R_B] (R_h) {$R(h)\geq\epsilon_{\min}$};
\node[a,left = 1cm of R_h] (R_E) {$R_E(P_{XY})\geq\epsilon_E$};
\node[left = 0cm of R_E] {$E$};
\node[above = 0cm of R_a_emp, text width=11em, align=left] {Specific property of $S_u$};
\node[above = 0cm of R_a, text width=10em, align=center] {Specific property of $P_X$};
\node[above = 0cm of R_A, text width=10em, align=center] {Property class of $P_X$};
\node[above = 0cm of R_C, text width=10em, align=center] {Property exploitation of $f$};
\node[above = 0cm of R_B, text width=10em, align=center] {Shared structure of $P_{XY}$};
\node[above = 0cm of R_h_z, text width=10em, align=center] {Risk of $h^Z \circ f$};
\node[above = 0cm of R_h, text width=10em, align=center] {Risk of $h$};
\node[above = 0cm of R_E, text width=10em, align=center] {Complexity of $P_{XY}$};
\draw [->](R_a_emp) -- node[above] {$A$} ++(R_a);
\draw [->](R_a_emp) -- node[below] {$A$} ++(R_A);
\draw [->](R_A) -- node[below] {$B$} ++(R_B);
\draw [->](R_a) -- node[above] {$C$} ++(R_C);
\draw [->](R_B) -- node[below] {$D$} ++(R_h_z);
\draw [->](R_C) -- node[above] {$D$} ++(R_h_z);
\draw [->](R_B) -- node[above] {$F$} ++(R_h);
\draw [->](R_E) -- node[below] {$F$} ++(R_h);
\node [left =0.5cm of R_a_emp,red](feature_learning) {With representation learning};
\node [below =4cm of feature_learning,blue](no_feature_learning) {Without representation learning};
\end{tikzpicture}

\caption{Visualization of the conditions for unsupervised representation learning to reduce risk, as described in Table \ref{conditions_table}. The statements in red boxes (above) provide an upper bound on risk using representation learning, while the statements in blue boxes (below) are additionally required to provide a lower bound on risk without using representation learning. A description of what each statement measures is provided above its box. Arrows indicate conditional implications and are labeled with the condition from Table \ref{conditions_table} they depend upon.} \label{dependency_map}
\vspace{-15pt}
\end{figure}

\subsection{Implication structure of sufficient conditions}
\label{technical_discussion}

The conditions allow a property test to assess the effect of representation learning through decomposition into a series of implications, as shown in Figure \ref{dependency_map}. The property test conducted on $S_u$ implies (condition $A$) both that $P_X$ has some \textit{specific property}, which in our examples is concentration on a particular region, and that $P_X$ has some property within a \textit{property class}, which in our examples is concentration on some region of a particular type. The \textit{specific property} of $P_X$ allows an appropriately constructed feature map $f$ to preserve the useful information in $P_X$ while moving to an input space that is simpler for a particular subsequent hypothesis learner to learn from (condition $C$). The \textit{property class} of $P_X$  is a more natural piece of information to provide to a domain expert with a view to eliciting shared structure in $P_{XY}$ (condition $B$).

The upper bound on $R(h^Z \circ f)$ depends both on the fact that $f$ simplifies the structure of the inputs to be compatible with hypothesis learner $h_L^Z$, and that the input structure is shared with the joint distribution $P_{XY}$ (condition $D$). The lower bound on $R(h)$ involves adding labels to the unlabeled data, subject to some constraints, and using the results of empirical risk minimization on `best case scenario' labels to provide a lower bound (condition $F$). Such constraints can be constructed using the existing assumption about $P_{XY}$ required for the upper bound on $R(h^Z \circ f)$, plus an additional assumption about the complexity of $P_{XY}$ (condition $E$).

\subsection{Interpretation of high probability statements}
\label{high_probability_interpretation}

The statements made with probability at least $1-\delta$ should be interpreted as follows. For any distribution $P_{XY}$, if an unlabeled sample $S_u$ and a labeled sample $S_l$ are drawn, the following statements (where applicable) will all be true with probability at least $1-\delta$.
\begin{enumerate}
\item For all regions in some set of regions specified independently of $S_u$, the total probability mass lying outside the region, $R_a(P_X)$,  is not too much greater than the empirical estimate $\hat{R}_a(P_X)$ calculated from $S_u$. (both examples, see Lemmas \ref{manifold_PX_theorem} and \ref{cluster_PX_theorem})
\item For a set of regions specified independently of $S_l$, there exists some subset of these regions containing at least some fixed proportion of probability mass, for which $S_l$ includes at least one point within each region in the subset. (both examples, see Lemmas \ref{manifold_hL_lemma} and \ref{cluster_hL_theorem})
\item For all regions in some set of regions specified independently of $S_u$, the total probability mass lying inside the region, $p_c$,  is not too much less than the empirical estimate $\hat{p}_c$ calculated from $S_u$. (manifold example only, see Lemma \ref{manifold_fL_lemma})
\item For a fixed hypothesis class $H$, the risk $R(h)$ of a hypothesis $h \in H$ learned using some labeling of $S_u$ is not too much less than some empirical quantity $\beta$ calculated from $S_u$. (cluster example only, see Lemma \ref{cluster_PX_hL_lemma})
\end{enumerate}

\subsection{Concept of transparent polymorphism}
\label{transparent_polymorphism}

Our analysis allows the comparison of a pair of (potentially unrelated) supervised learners, but focuses on `transparently polymorphic' learners to isolate the effect of the representation learning step. These are families of learning algorithms such that given a change of type signature the algorithm can straightforwardly be extended to the new type. The examples given in this text --- empirical risk minimization over the class of linear classifiers, and 1-nearest neighbor using Euclidean distance --- are examples of such learners. A precise definition of this class of learners is not attempted here and does not appear to have been well-studied elsewhere. A quantitative measure of the transparent polymorphism of a particular learner is of interest in its own right given the ubiquity of such learners in practical applications. It is easy to construct a learner which is \textit{not} transparently polymorphic: for example, a learner which conducts logistic regression if it receives inputs with ten dimensions or fewer, and which uses a support vector machine if it receives inputs with more than ten dimensions.  The same issue arises in relation to the number of labeled examples available to the learner. A `transparently polymorphic' learner should behave predictably when the the number of labeled examples available to it is varied.

\section{Generalization error bounds in semi-supervised learning}
\label{ssl_bound}

A formalization of semi-supervised learning is provided in \cite{balcan_discriminative_2010}, including improved generalization error bounds compared to supervised learning. We describe this approach to build on the comparison between unsupervised representation learning and semi-supervised learning presented in Section \ref{semisupervised_relationship}. For a hypothesis class $H$ and input space $X$, define a compatibility function $\chi: H \times X \to [0,1]$. The incompatibility of a hypothesis $h$ with respect to an unlabeled distribution $P_X$ is defined as $R_U(h)=1-\mathbb{E}_{x \sim P_X}[\chi(h,x)]$, while the incompatibility of a hypothesis with respect to an unlabeled sample $S_u$ of $m_u$ points is $\hat{R}_{U}(h)=1-\frac{1}{m_u}\sum\limits_{x \in S_u}\chi(h,x)$.

The benefit of semi-supervised learning rests on the idea that for the target function $h^*$, $R_{U}(h^*)=0$ in the simplest case, which we consider here, or in other cases $R_{U}(h^*)$ is small. Moreover, for only a small subset of hypotheses $H_{P_X,\chi}(\epsilon) \subset H$, $R_{U}(h)\leq\epsilon$ for all $h \in H_{P_X,\chi}(\epsilon)$, so that the size of the hypothesis class is reduced. In this case we assume that the hypothesis class $H$ is finite. For a hypothesis for which $\hat{R}_{U}(h)=0$ and $\hat{R}(h)=0$, and a sufficiently large number of unlabeled examples $m_u$, the number of labeled examples $m_l$ required to show that that with probability at least $1-\delta$, $R(h)\leq\epsilon$ is:

\begin{equation}
m_l=\frac{1}{\epsilon}(\log|H_{P_X,\chi}(\epsilon)|+\log\frac{2}{\delta})
\end{equation}

Assuming that the compatibility function $\chi$ and marginal distribution $P_X$ allow a significant subset of the hypothesis space $H$ to be pruned, the sample complexity is reduced in comparison to the typical supervised learning requirement (see Lemma \ref{finite_lemma}).

The result can also be generalized to infinite hypothesis classes, although in this case it is not possible to directly obtain improved sample complexity using VC-dimension as a measure of hypothesis class size. Instead the authors propose the use of a distribution dependent measure,  $H_{P_X,\chi}(\epsilon)[m,P_X]$, which measures the expected number of splits of $m$ points drawn from $P_X$, using hypotheses $h \in H$ such that $R_{U}(h)\leq\epsilon$.

\section{Algorithm for selecting a feature learner}
\label{algorithm_uor_selecting_A_feature_learner}

Algorithm \ref{feature_learning_Algorithm} uses risk upper bounds to select a feature learner. It is presented as a high-level conceptual algorithm rather than as a tool for immediate practical use. The algorithm accepts a set of possible feature learners $\mathcal{F}_L$, an unlabeled sample $S_u$, a hypothesis learner $h_L$, the number of labeled samples $m_l$, the level of certainty parameter $\delta$ and a dictionary of bounds. Each bound in the dictionary contains a property test for $P_X$ based on an unlabeled sample, a feature learner $f_L$ and a hypothesis learner $h_L$ to which the bound applies, and appropriate assumptions on $P_{XY}$ based on the results of the property test. Two examples of such bounds are provided in Section \ref{examples} of this paper.

For each bound, the algorithm runs the associated property test and using this computes the bound result. If the bound is tighter than any previous bound, then the current feature learner is selected as the best to date. The list of bounds should include a bound for the feature learner which simply returns the identity function. This approach is similar to structural risk minimization, which selects the hypothesis space minimizing a probabilistic upper bound on risk \citep{vapnik_nature_2013}.

\begin{algorithm}
\SetKwProg{myproc}{Function}{}{}
\myproc{select\_feature\_learner($\mathcal{F}_L$,$S_u$,$h_L$,$m_l$,$\delta$,bound\_dictionary)}{

	\textit{best\_bound\_result}$:=$Inf,\textit{ best\_$f_L$}$:=$null \\
	\For{each $f_L$ in $\mathcal{F}_L$}{
		\For{each bound in bound\_dictionary}{
			\If{bound[$f_L$]$=$$f_L$ and bound[$h_L$]$=$$h_L$}{
				\textit{property\_test\_result}$:=$\textit{property\_test}(\textit{bound[test]},$S_u$) \\
				\textit{bound\_result}$:=$\textit{compute\_bound}(\textit{bound},\textit{property\_test\_result},$m_l$,$\delta$) \\
				\If{bound\_result$<$best\_bound\_result}{
					\textit{best\_bound\_result$:=$bound\_result} \\
					\textit{best\_$f_L:=f_L$}
				}
			}
		}
	}
	\Return [\textit{best\_$f_L$}, \textit{best\_bound\_result}]
 }
\vspace{10pt}
\caption{Selecting a feature learner using risk upper bounds}
\label{feature_learning_Algorithm}
\end{algorithm}

\section{Main theorem proofs and lemmas used in examples}

We present the proofs of the main theorems. We also introduce a technical lemma and present standard generalization error bounds which are used in the proofs for the theorems associated with the examples.

\subsection{Proof of Theorem \ref{sample_complexity_theorem}}

\begin{proof}
By the definitions in Table \ref{conditions_table}, $\hat{R}_a(P_X)\leq\hat{\epsilon}_A \wedge A(P_X) \implies R_a(P_X)\leq\epsilon_A \wedge R_A(P_X)\leq\epsilon_A$. $ R_A(P_X)\leq\epsilon_A \wedge B(P_{XY}) \implies R_B(P_{XY})\leq\epsilon_B$. Also, $R_a(P_X)\leq\epsilon_A \wedge C(f_L) \implies R_C(f)\leq\epsilon_C$. Furthermore, $D(h_L) \wedge R_B(P_{XY})\leq\epsilon_B \wedge  R_C(f)\leq\epsilon_C \implies  R(h^Z \circ f)\leq \epsilon_{\max}^Z$. Therefore $\hat{R}_a(P_X)\leq\hat{\epsilon}_A \wedge A(P_X) \wedge B(P_{XY}) \wedge C(f_L) \wedge D(h_L) \implies  R(h^Z \circ f)\leq \epsilon_{\max}^Z$.
If $\hat{R}_a(P_X)\leq\hat{\epsilon}_A$ and with probability at least $1-\delta$, $A, B, C$ and $D$ all hold, then with at least the same probability the right hand side of the statement holds.
\end{proof}
\vspace{-20pt}
\subsection{Proof of Theorem \ref{risk_gap_theorem}}

\begin{proof}
By Theorem \ref{sample_complexity_theorem}, $\hat{R}_a(P_X)\leq\hat{\epsilon}_A \wedge A(P_X) \wedge B(P_{XY}) \wedge C(f_L) \wedge D(h_L) \implies  R(h^Z \circ f)\leq \epsilon_{\max}^Z$. Similarly, $\hat{R}_a(P_X)\leq\hat{\epsilon}_A \wedge A(P_X) \wedge B(P_{XY}) \wedge E(P_{XY}) \wedge F(P_X,h_L) \implies  R(h)\geq \epsilon_{\min}$.
If $\hat{R}_a(P_X)\leq\hat{\epsilon}_A$ and with probability at least $1-\delta$, $A, B, C, D, E$ and $F$ all hold, then with at least the same probability the right hand sides of both statements hold.
\end{proof}
\vspace{-20pt}

\subsection{Technical lemma on sample coverage of finite number of bins}
\label{alpha_discussion}

\begin{figure}
\includegraphics[scale=0.23]{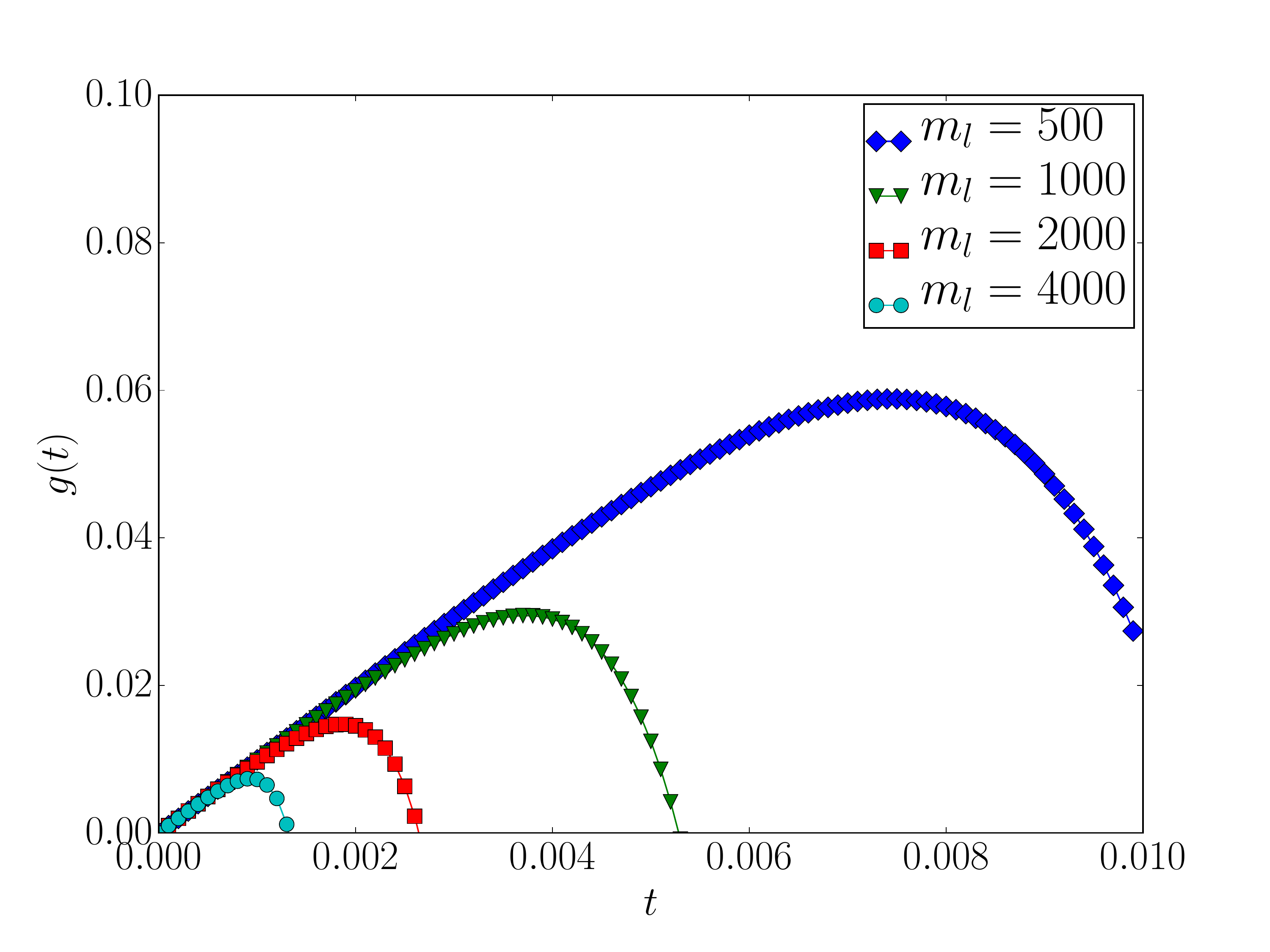}
\includegraphics[scale=0.23]{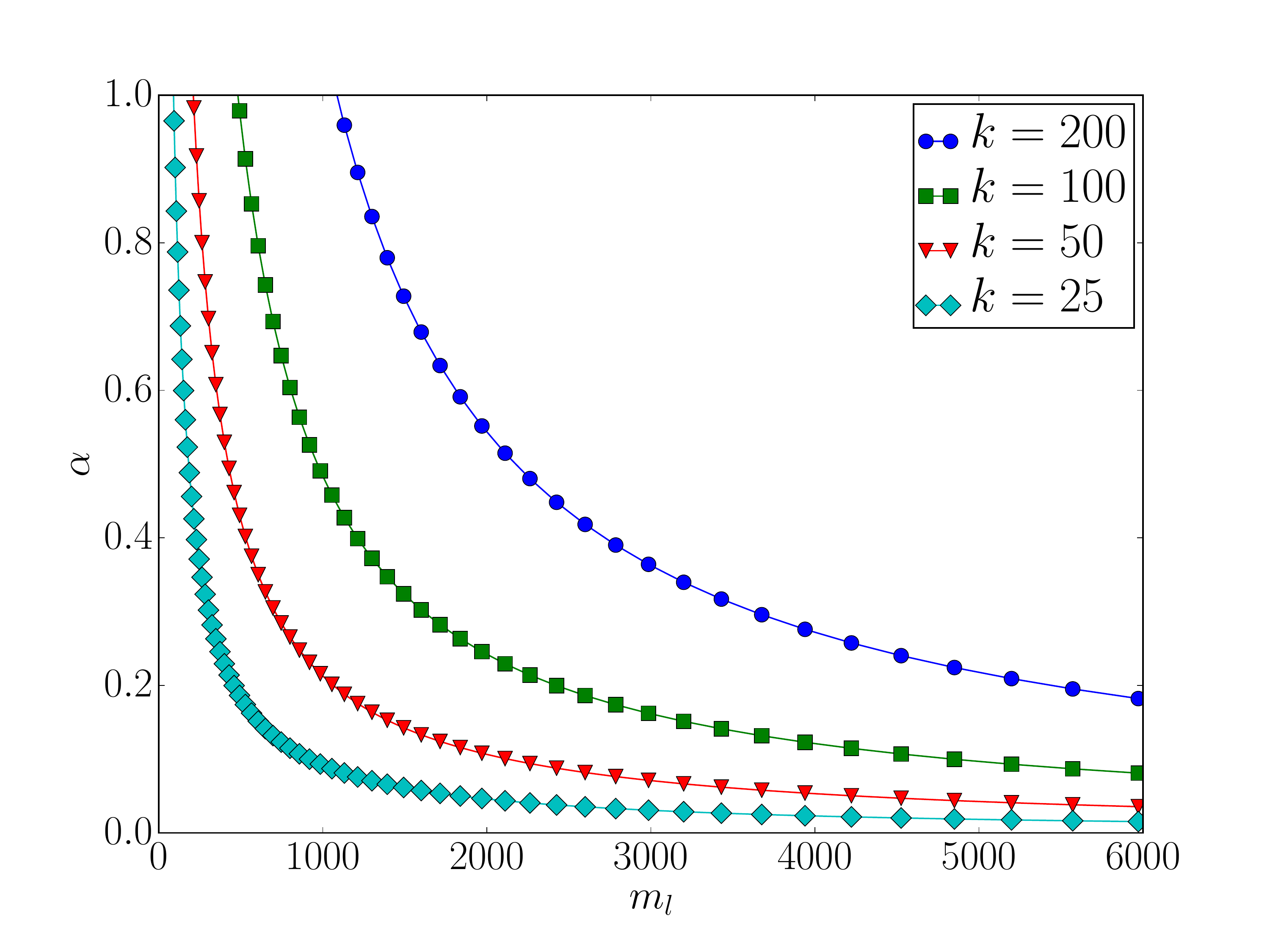}
\vspace{-15pt}
\caption{The function $g(t)=(k-\delta(1-t)^{-m_l})t$ to be maximized is a concave function (left), where the parameters used are $k=10$ and $\delta=0.05$. The quantity $\alpha:=\underset{t \in [0,1]}{\max}g(t)$ (right) is shown for various values of $m_l$ and $k$, where again $\delta=0.05$.} \label{alpha_fig}
\end{figure}

We introduce and analyze a technical lemma which will be of use in establishing condition $D(h_L)$ in our examples. In a setting where a labeled sample is drawn from a finite number of bins, Lemma \ref{alpha_lemma} provides a probabilistic upper bound on the total probability mass of the bins containing no labeled points. Figure \ref{alpha_fig} demonstrates that the bound is tightest when the labeled sample size is large relative to the number of bins.

\begin{lemma}
\label{alpha_lemma}
Assume that a labeled sample $S_l$ of size $m_l$ is drawn from $k$ bins. With probability at least $1-\delta$, the total probability mass of the bins containing no labeled points is no greater than $\alpha:=\underset{t \in [0,1]}{\max}g(t)=(k-\delta(1-t)^{-m_l})t$. Furthermore, $g(t)$ is concave.
\end{lemma}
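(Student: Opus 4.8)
The plan is to fix an arbitrary assignment of probability masses $p_1,\dots,p_k$ to the $k$ bins (with $\sum_i p_i = 1$), bound the expected mass of the empty bins, and then convert this into a high-probability statement by a clean tail argument that introduces a free parameter $t$ which we optimize at the end. First I would compute, for a single bin $i$ with mass $p_i$, the probability that it receives no labeled point: since the $m_l$ draws are i.i.d., this is exactly $(1-p_i)^{m_l}$. Writing $M := \sum_{i : \text{bin }i\text{ empty}} p_i$ for the (random) total mass of empty bins, linearity of expectation gives $\mathbb{E}[M] = \sum_{i=1}^k p_i (1-p_i)^{m_l}$. The elementary inequality $(1-p)^{m_l} \le e^{-p m_l} \le \frac{1}{p m_l}$ (or more simply $p(1-p)^{m_l} \le p$, and a sharper per-term bound) lets me control this sum; in fact one wants a bound of the form $\mathbb{E}[M] \le c/m_l$ for an absolute constant, but for the statement as written it suffices to extract the shape $(k - \delta(1-t)^{-m_l})t$.

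The key device is a thresholding/union-bound step. For any $t \in [0,1]$, decompose $M$ according to whether bins have mass above or below $t$. Bins with $p_i \le t$ contribute at most... — actually the cleaner route: $\Pr[M \ge \text{something}]$. I would argue as follows. Say a bin is "small" if $p_i < t$ and "large" otherwise. There are at most $1/t$ large bins, and each large bin is empty with probability $(1-p_i)^{m_l} \le (1-t)^{m_l}$; by a union bound, with probability at least $1 - \frac{1}{t}(1-t)^{m_l}$ no large bin is empty, so all empty bins are small. When that event holds, every empty bin has mass $< t$, and there are at most $k$ of them, so $M \le$ (number of empty small bins)$\cdot t \le $ — but this gives $kt$, too crude. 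The refinement: the number of empty bins is at most $k$, but more usefully the event "some large bin empty" has probability $\le \frac{1}{t}(1-t)^{m_l}$; rescaling so the failure probability is $\delta$ amounts to choosing things so that the bound reads $(k - \delta(1-t)^{-m_l})t$ after algebra — i.e. one shows $\Pr[M > (k - \delta(1-t)^{-m_l})\,t\,] \le \delta$ by combining the count $k$ of small-bin slots of width $t$ with the $\delta$-probability correction coming from the large bins, via Markov on the large-bin contribution. I would then take the maximum over $t \in [0,1]$, since the inequality holds for every $t$, to get the stated $\alpha$.

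Finally, concavity of $g(t) = (k - \delta(1-t)^{-m_l})t$: I would just differentiate. We have $g(t) = kt - \delta t (1-t)^{-m_l}$; the first term is linear, so it suffices to show $t \mapsto t(1-t)^{-m_l}$ is convex on $[0,1)$. Write $\phi(t) = t(1-t)^{-m_l}$; then $\phi'(t) = (1-t)^{-m_l} + m_l t (1-t)^{-m_l - 1} = (1-t)^{-m_l-1}(1 - t + m_l t)$, and $\phi''(t) = (m_l+1)(1-t)^{-m_l-2}(1 + (m_l-1)t) + m_l(1-t)^{-m_l-1}$, which is manifestly nonnegative for $t \in [0,1)$ and $m_l \ge 1$. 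Hence $-\delta\phi$ is concave, $kt$ is affine, and $g$ is concave; this also justifies that the maximization defining $\alpha$ is a well-posed one-dimensional concave program. The main obstacle I anticipate is getting the thresholding bookkeeping to land \emph{exactly} on the expression $(k - \delta(1-t)^{-m_l})t$ rather than a slightly weaker relative of it — the trick is to recognize that one should allocate the $\delta$ failure budget entirely to the "some large bin is empty" event (bounded via Markov/union bound by $\frac{1}{t}(1-t)^{m_l}$, rearranged), and then bound $M$ deterministically on the complement by counting $t$-width slots, absorbing the correction as the negative $-\delta(1-t)^{-m_l}t$ term.
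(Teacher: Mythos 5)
Your proposal is correct and follows essentially the same route as the paper: threshold the bins at mass $t$, union-bound the event that some heavy bin is empty, bound the light bins' total mass by (number of light bins) times $t$, and eliminate the number $q$ of heavy bins via $\delta = q(1-t)^{m_l}$ before maximizing over the unknown $t$; the concavity argument by direct differentiation is also the paper's. (Your expression for $\phi''$ contains a small arithmetic slip --- it should be $2m_l(1-t)^{-m_l-1}+m_l(m_l+1)t(1-t)^{-m_l-2}$ --- but every term is nonnegative either way, so the conclusion is unaffected.)
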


\begin{proof}
Within each of bin $i$ lies some probability mass $p_i$. Now consider the set $Q$ containing the $q$ bins with highest probability mass. Let $t=\underset{Q}{\min}p_i$. With probability at least $1-q(1-t)^{m_l}$, all of the $q$ intervals contain at least one point in $S_l$. This can be shown by observing that for a single interval it will be empty with probability at most $(1-t)^{m_l}$ and then applying the union bound over the $q$ intervals.  Observe that the remaining intervals have mass at most $(k-q)t$ and eliminate $q$ by setting $\delta=q(1-t)^{m_l}$, yielding an upper bound on the mass of the remaining intervals of $(k-\delta(1-t)^{-m_l})t=:g(t)$ if $t$ is known. If $t$ is unknown, the upper bound is $\alpha:=\underset{t \in [0,1]}{\max}g(t)$. Note that $q\leq \frac{1}{t}$ so that the equation $\delta\leq \frac{1}{t}(1-t)^{m_l}$ provides an upper bound on $t$. $g(t)$ is concave since $\frac{d^2g}{dt^2}=-\delta m_l(1-t)^{-(m_l+1)}(1+\frac{t(m_l+1)}{1-t})$ is non-positive for $t \in [0,1]$.
\end{proof}

\subsection{Standard generalization error bounds}
We present standard generalization error bounds which we make use of in subsequent lemmas. Lemma \ref{finite_lemma} is used in the case of finite hypothesis classes, while Lemma \ref{infinite_lemma} is used in the case of infinite hypothesis classes. The proofs for both lemmas appear in \cite{mohri_foundations_2012}. In the case of Lemma \ref{infinite_lemma}, the bound is presented as an upper bound on $R(h)$ with the square root term on the right hand side added to rather than subtracted from $\hat{R}(h)$. However, a symmetric lower bound can be straightforwardly derived using the same proof.

\begin{lemma}
\label{finite_lemma}
For a hypothesis $h$ drawn from a finite hypothesis class $H$ learned from $m$ examples for which $\hat{R}(h)=0$, with probability at least $1-\delta$, $R(h) \leq \frac{1}{m}(\log|H|+\log\frac{1}{\delta})$.
\end{lemma}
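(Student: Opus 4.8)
The plan is to bound the probability that some ``bad'' hypothesis --- one with true risk exceeding the claimed threshold --- nevertheless achieves zero empirical risk on the sample, and then observe that the learned $h$ can only be such a hypothesis if this bad event occurs. First I would fix the threshold $\epsilon := \frac{1}{m}(\log|H| + \log\frac{1}{\delta})$ and call a hypothesis $h' \in H$ \emph{bad} if $R(h') > \epsilon$. For any single bad $h'$, the probability over the draw of the $m$ i.i.d.\ examples that $h'$ is consistent with all of them (i.e.\ $\hat{R}(h') = 0$) is at most $(1 - R(h'))^m < (1-\epsilon)^m \le e^{-m\epsilon}$, using the elementary inequality $1 - x \le e^{-x}$.

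Next I would apply a union bound over all hypotheses in $H$ (it suffices to union over the bad ones, but bounding by $|H|$ is harmless): the probability that \emph{some} bad hypothesis is consistent with the sample is at most $|H| \, e^{-m\epsilon}$. Substituting the chosen value of $\epsilon$ gives $|H| \, e^{-m\epsilon} = |H| \cdot e^{-\log|H| - \log\frac{1}{\delta}} = |H| \cdot \frac{1}{|H|} \cdot \delta = \delta$. Hence with probability at least $1 - \delta$, no bad hypothesis is consistent with the sample.

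Finally I would close the argument by noting that the learned hypothesis $h$ satisfies $\hat{R}(h) = 0$ by assumption, so $h$ is consistent with the sample; therefore on the high-probability event above, $h$ cannot be bad, which is exactly the statement $R(h) \le \epsilon = \frac{1}{m}(\log|H| + \log\frac{1}{\delta})$. There is no real obstacle here --- this is the textbook realizable-case generalization bound for finite hypothesis classes, and the only thing to be careful about is that the union bound is taken over hypotheses chosen \emph{independently} of the sample (which holds since $H$ is fixed in advance), so that the per-hypothesis consistency probabilities may be summed. As noted in the excerpt, the full proof appears in \cite{mohri_foundations_2012}; the sketch above is the standard one.
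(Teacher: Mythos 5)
Your proof is correct and is exactly the standard realizable-case argument for finite hypothesis classes that the paper relies on: the paper does not reprove this lemma but cites \cite{mohri_foundations_2012}, whose proof is the same consistency-plus-union-bound calculation you give. No discrepancies to report.
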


\begin{lemma}
\label{infinite_lemma}
Let $d$ be the VC dimension of hypothesis class $H$. For all hypotheses $h$ drawn from $H$ learned from $m$ examples, with probability at least $1-\delta$, $ R(h) \geq \hat{R}(h)-\sqrt{\frac{8d\log\frac{2em}{d}+8\log\frac{4}{\delta}}{m}}$.
\end{lemma}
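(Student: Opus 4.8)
The plan is to prove the two-sided uniform deviation bound and then read off the stated one-sided lower bound, exactly as the text indicates the upper-bound proof in \cite{mohri_foundations_2012} can be mirrored. Since the claim is that $R(h) \geq \hat{R}(h) - \epsilon$ holds uniformly over $h \in H$ with $\epsilon = \sqrt{(8d\log\frac{2em}{d} + 8\log\frac{4}{\delta})/m}$, it suffices to control $\sup_{h \in H}(\hat{R}(h) - R(h))$; controlling instead $\sup_{h}(R(h) - \hat{R}(h))$ would give the matching upper bound, and the two arguments are identical up to the sign of the deviation. I would therefore target a tail bound of the form $\mathbb{P}[\sup_h (\hat{R}(h) - R(h)) > \epsilon] \leq 4\,\Pi_H(2m)\,e^{-m\epsilon^2/8}$, where $\Pi_H$ is the growth (shattering) function, and then invert it in $\epsilon$.

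First I would apply the Vapnik--Chervonenkis symmetrization step: introduce an independent ghost sample $S'$ of size $m$ and show that, provided $m\epsilon^2 \geq 2$, $\mathbb{P}[\sup_h(\hat{R}_S(h) - R(h)) > \epsilon] \leq 2\,\mathbb{P}[\sup_h(\hat{R}_S(h) - \hat{R}_{S'}(h)) > \epsilon/2]$. The purpose of this step is to replace the unknown population risk $R(h)$ by an empirical risk on a second finite sample, so that the supremum over the infinite class $H$ depends only on the behaviour of $H$ on the $2m$ observed points. Conditioning on the pooled multiset of these $2m$ points, the number of distinct labelings realizable by hypotheses in $H$ is at most $\Pi_H(2m)$, so a union bound reduces the problem to a single fixed dichotomy.

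For a fixed labeling I would use a random-permutation (sign-flipping) symmetrization together with Hoeffding's inequality applied to the $1/m$-bounded differences to bound the conditional probability that $\hat{R}_S - \hat{R}_{S'}$ exceeds $\epsilon/2$ by $2e^{-m\epsilon^2/8}$. Combining this with the Sauer--Shelah lemma, $\Pi_H(2m) \leq (2em/d)^d$ for $m \geq d$, yields the displayed tail bound. Setting its right-hand side equal to $\delta$ and solving $\log 4 + d\log\frac{2em}{d} - m\epsilon^2/8 = \log\delta$ for $\epsilon$ produces precisely $\epsilon = \sqrt{(8d\log\frac{2em}{d} + 8\log\frac{4}{\delta})/m}$.

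The main obstacle is pinning down the constants rather than the overall structure: the factor of $2$ in symmetrization rests on a Chebyshev/Hoeffding lower bound on the ghost-sample deviation and hence on the mild side condition $m\epsilon^2 \geq 2$ (which holds in the regime of interest), while the constant $8$ in the exponent arises from Hoeffding applied after pooling two samples of size $m$ with per-point range $1/m$. Finally, because every step is symmetric in the sign of the deviation, running the identical argument on $\sup_h(\hat{R}(h) - R(h))$ delivers the stated $R(h) \geq \hat{R}(h) - \epsilon$ uniformly over $H$, which is exactly the lower bound claimed.
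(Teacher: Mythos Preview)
Your proposal is correct and follows precisely the approach the paper indicates: the paper does not give its own proof but defers to the standard VC uniform-convergence argument in \cite{mohri_foundations_2012} and notes that the lower bound is obtained from the upper bound by the obvious sign symmetry, which is exactly what you carry out (symmetrization with a ghost sample, union bound via the growth function, Hoeffding, and Sauer--Shelah to recover the stated constants).
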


\section{Supplementary material for the cluster example}

The parameters used for the cluster example in Section \ref{cluster} are summarized in Table \ref{cluster_table}. Proofs for Theorem \ref{cluster_theorem} and the lemmas used are shown subsequently. For a given set of parameters, we plot the risk gap induced by the learned features in Figure \ref{example_results_fig} (right), which depends both on the number of unlabeled points $m_u$ and the number of labeled points $m_l$. In particular, the learned features are provably useful only when the risk gap is positive. This is not always the case, which is not surprising since we expect that a given representation learning technique will only be useful in certain prediction contexts.

\begin{table}[h!]
\small

\begin{center}
\begin{tabular}{p{1.3cm}|p{2cm}|p{5.8cm}|p{3.3cm}}
\bf Condition & \bf Description &\bf Risk function $R$ & \bf Threshold $\epsilon$ \\ \hline
\multicolumn{4}{l}{\it Upper bound on risk using representation learning} \\ \hline
$A(P_X)$ & If property test passes, marginal distribution is concentrated on disjoint clusters & $R_a(P_X)=\mathbb{E}_{x \sim P_X}[L_a(x)]$, where $L_a(x)=\mathbf{1}(x \not\in \bigcup\limits_{X_i \in X_a}X_i)$.  $\hat{R}_a(P_X)$ is the result of Algorithm \ref{cluster_Algorithm} and $R_A(P_X)=\underset{X_a \in \mathcal{X}_A}{\min}R_a(P_X)$. See Appendix \ref{cluster_PX_theorem_proof} for the definition of $\mathcal{X}_A$.  & $\hat{\epsilon}_A=0$ \newline $\epsilon_A=\frac{1}{m_u}(s^{-n}\log2+\log\frac{3}{\delta})$ \\ \hline
$B(P_{XY})$ & Given cluster structure exists, nearby points share labels & $R_B(P_{XY})=\mathbb{E}_{\{x,y\},\{x',y'\} \sim P_{XY}}[L_B(\{x,y\},\{x',y'\})]$, where $L_B(\{x,y\},\{x',y'\})=\mathbf{1}(d_\gamma(x,x')=0)\mathbf{1}(y\neq y')$ and $d_\gamma$ is defined in Section \ref{cluster_cond_abcd}& $\epsilon_B=0$  \\ \hline
$C(f_L)$ & Feature learner uses cluster structure to learn one-hot code for each cluster  &$R_C(f)=\mathbb{E}_{x \sim P_X}[L_C(x,f)]$, where $L_C(x,f)=\underset{\{x': f(x)=f(x')\}}{\max}d_\gamma(x,x')$ & $\epsilon_C=\epsilon_A$  \\ \hline
$\rule{0pt}{3ex}$$D(h_L)$ &Clusters are approximately linearly separable in new representation& $R(h^Z \circ f)=\mathbb{E}_{\{x,y\} \sim P_{XY}}[L(h^Z(f(x)),y)]$ where $L(y,y')=\mathbf{1}(y\neq y')$ & $\epsilon^Z_{\max}= \epsilon_C + \underset{t \in [0,1]}{\max}(k+1-\frac{\delta}{3}(1-t)^{-m_l})t$ \\ \hline
\multicolumn{4}{l}{\it Lower bound on risk without using representation learning} \\ \hline
$\rule{0pt}{2ex}$
$E(P_{XY})$ & There is no single label which is correct for most points &$R_E(P_{XY})=\mathbb{E}_{\{x,y\} \sim P_{XY}}[L_E(\{x,y\})]$, where $L_E(\{x,y\})=\mathbf{1}(y=1)$ and without loss of generality we assume $P(y=0)\geq P(y=1)$ & $\epsilon_E\in(0,\frac{1}{2}]$ is specified by a domain expert \\ \hline
$F(P_X,h_L)$ & The clusters are not approximately linearly separable in the original representation &$R(h)$, using the definition of $R$ provided above in condition $D(h_L)$ & $\epsilon_{\min}=\beta-\sqrt{\frac{8(n+1)\log\frac{2em_u}{n+1}+8\log\frac{12}{\delta}}{m_u}}$, where $\beta$ is defined in Appendix \ref{cluster_PX_hL_lemma_proof}. \\
\end{tabular}
\end{center}
\caption{Parameters for cluster example} \label{cluster_table}
\end{table}

\subsection{Kernel interpretation}
\label{cluster_kernel_interpretation}

It is possible to obtain a dual form of linear classifiers working with a kernel function $k$ in the original input space, which is equivalent to taking a dot product in the feature space associated with the feature map $f$. This may be written as $k(x,x')=\langle f(x),f(x') \rangle$. By the definition of $f(x)$ stated in Section \ref{cluster_cond_abcd}, and setting $c_x$ and $c_{x'}$ to be the hypercubes in which $x$ and $x'$ lie respectively, we obtain:

$
k(x,x')=
\begin{cases}
1,& \text{if } \exists x_0,\dots,x_G \in S_u \text{ s.t. }x_0 \in c_x \wedge x_G \in c_{x'} \wedge \bigwedge\limits_{i=0}^{G-1}||x_{i}-x_{i+1}||_2\leq s\sqrt{n}\\
0, & \text{otherwise.}
\end{cases}
$

Thus the proposed representation can be viewed as a form of unsupervised kernel learning. However, in the general case of hypothesis learner $h_L$, no kernel interpretation is possible because hypotheses learned will not in general have a dual form like linear classifiers.

\subsection{Proof of Theorem \ref{cluster_theorem}}
\label{cluster_theorem_proof}
\begin{proof}
With probability at least $1-\delta$, by the union bound, we have $A(P_X)$ by Lemma \ref{cluster_PX_theorem}, $B(P_{XY})$ by Assumption \ref{cluster_PXY_Assumption}, $C(f_L)$ by Lemma \ref{clusteR_FL_theorem}, $D(h_L)$ by Lemma \ref{cluster_hL_theorem}, $E(P_{XY})$ by Assumption \ref{cluster_PXY_Assumption2} and $F(P_X,h_L)$ by Lemma \ref{cluster_PX_hL_lemma}. Applying Theorem \ref{risk_gap_theorem}, we have $\Delta R \geq \epsilon_{\min}-\epsilon_{\max}^Z$.
\end{proof}

\subsection{Proof of Lemma \ref{cluster_PX_theorem}}
\label{cluster_PX_theorem_proof}
First, given a distribution $P_X$ and a side length parameter $s$, let $\gamma:=\frac{s}{\sqrt{n}}$ and let $\mathcal{X}_A$ be the set of all sets $X_a$ with the following properties:

\begin{enumerate}

\item $X_a=\bigcup\limits_{i=1}^k\{X_i\}$ for some finite $k\geq2$, where $X_i \subset X$ for all $i\leq k$

\item $||x-x'||_2 > \gamma$ for all $x \in X_i$, $x' \in X_j$, $i \neq j$

\item For all $x \in X_i$ there exists some point $x'$ supported by $P_X$ such that $||x-x'||_2\leq\gamma$.

\end{enumerate}
The algorithm for testing for and learning cluster structure from unlabeled data is presented in Algorithm \ref{cluster_Algorithm}. The algorithm returns failure if it cannot find at least two disjoint clusters that all of $S_u$ lie within. In future this restriction could be relaxed to find clusters that \textit{most} of the data lies within, allowing us to consider the case where $\hat{\epsilon}_A>0$.
\begin{algorithm}

 \caption{Testing for and learning cluster structure}
 \label{cluster_Algorithm}
 \KwData{unlabeled sample $S_u$, input space $X$, side length parameter $s$}
 \KwResult{Clusters ($X_a$) and proportion of $S_u$ not concentrated on clusters ($\hat{R}_a$), or failure}
 \textit{Divide $X$ into hypercubes of side $s$} \\
 \textit{Set $\gamma:=\frac{s}{\sqrt{n}}$} \\
 \textit{Build graph from $S_u$, placing an edge between two points $x$ and $x'$ if $||x-x'||_2\leq\gamma$} \\
 \textit{Extract components of graph, where $C_i$ is the set of points in the $i$th component} \\
 \textit{Set $X_a:=\{\}$} \\
 \For {each component $C_i$}{
   Set $X_i:=\{\}$ \\
    \For {each point $x \in C_i$}{
       \textit{Find the hypercube $c_x$ such that $x \in c_x$} \\
       \textit{Set $X_i := X_i \cup c_x$}
     }
      \textit{Set $X_a:=X_a \cup \{X_i\}$}
   }
  \eIf{$|X_a|\geq 2$}{
$\hat{R}_a:=0$ \\
\Return $\{X_a,\hat{R}_a\}$\;
   }{
\Return \textit{failure}\
  }
\end{algorithm}

We now state the proof of Lemma \ref{cluster_PX_theorem}.

\begin{proof}
If $\hat{R}_a(P_X)=0$, then Algorithm \ref{cluster_Algorithm} succeeds and also returns some $X_a \in \mathcal{X}_A$.
For each set $X_a \in \mathcal{X}_A$ define a hypothesis $h_a$, where $h_a(x)=1$ if $x \in \bigcup\limits_{X_i \in X_a}X_i$, and $h_a(x)=0$ otherwise. Let $H_A$ be the set of such hypotheses, where $|H_A|\leq2^{s^{-n}}$. Let $R(h_a):=\mathbb{E}_{x \sim P_X}[L(h_a(x),1)]$ and $\hat{R}(h_a):=\frac{1}{m_u}\sum\limits_{x \in S_u}L(h_a(x),1)$, recalling that $L$ is 0/1 loss.

Because $\hat{R}(h_a)=\frac{1}{m_u}\sum\limits_{x \in S_u}L_a(x)$ by definition, and $\frac{1}{m_u}\sum\limits_{x \in S_u}L_a(x)=0$ if $\hat{R}_a(P_X)=0$ by the construction of Algorithm \ref{cluster_Algorithm}, we have $\hat{R}(h_a)=0$. Hence we may apply Lemma \ref{finite_lemma}, which yields the required high probability upper bound on $R(h_a)$. Since $R(h_a)=R_a(P_X)$ by definition, the same bound holds for $R_a(P_X)$. The bound also holds for $R_A(P_X)$, since $R_A(P_X):=\underset{X_a \in \mathcal{X}_A}{\min}R_a(P_X)$.
\end{proof}

\subsection{Proof of Lemma \ref{clusteR_FL_theorem}}
\begin{proof}
By the construction of $f_L$, for all points $x$ such that $f(x)$ is not the zero vector, $L_C(x,f):=\underset{\{x': f(x)=f(x')\}}{\max}d_\gamma(x,x')=0$. These points $x$ are precisely those points for which $L_a(x)=0$. Therefore if $R_a(P_X)\leq\epsilon_A$, then $R_C(f)\leq\epsilon_A$.
\end{proof}

\subsection{Proof of Lemma \ref{cluster_hL_theorem}}

\begin{proof}
Applying $f$ to $P_X$ induces a distribution supported at $k+1$ points.   With probability at least $1-\frac{\delta}{3}$, $\mathbb{E}_{x \sim P_X}[\underset{\{f(x'),y'\} \in S_l^Z}{\min}\mathbf{1}(f(x)\neq f(x'))]\leq\underset{t \in [0,1]}{\max}(k+1-\frac{\delta}{3}(1-t)^{-m_l})t=:\alpha$, by Lemma \ref{alpha_lemma}.

Recall that by the definition of $R_C(f)$, if we randomly draw a point $x$ from $P_X$ then with probability at most $\epsilon_C$ there exists some $x'$ such that $d_\gamma(x,x')>0$ and $f(x)=f(x')$. Combining the last two bounds by the union bound, with probability at most $\alpha+\epsilon_C$ there is either no point $\{f(x'),y'\} \in S_l^Z$ such that $f(x)=f(x')$, or $d_\gamma(x,x')>0$ for at least one point $\{f(x'),y'\} \in S_l^Z$ such that $f(x)=f(x')$. If neither of these possibilities occur, $h^Z \circ f$ will be able to correctly classify the point. This is because there is at least one training set point such that $f(x)=f(x')$, for all points in the training set such that $f(x)=f(x')$ we have $d_\gamma(x,x')=0$, we know that if $d_\gamma(x,x')=0$ then the labels of $x$ and $x'$ must agree because $R_B(P_{XY})=0$, and finally we know that $h_Z \circ f$ will achieve zero training set error since a linear classifier in $k$ dimensions can perfectly classify $k+1$ points. Therefore we have with probability at least $1-\frac{\delta}{3}$, $R(h^Z \circ f)\leq \epsilon_C + \alpha$.
\end{proof}

\vspace{-10pt}
\subsection{Proof of Lemma \ref{cluster_PX_hL_lemma}}
\label{cluster_PX_hL_lemma_proof}

\begin{proof}
Run the property testing algorithm described in Algorithm \ref{cluster_Algorithm} to obtain $k=|X_a|$.
Run $\frac{1}{2}k(k-1)$ tests of the following kind, for each pair $X_i, X_j \in X_a$, $i \neq j$. Construct the labeled set $S_u^{ij}$ which includes only those points in $S_u$ that lie in $X_i \cup X_j$, adding the labels $y=1$ for $x \in X_i$ and $y=0$ for $x \in X_j$. Run empirical risk minimization on $S_u^{ij}$ to produce the hypothesis $h^{ij}$, which we assume has the minimum empirical risk of any hypothesis in $H$. Let $\mathcal{S}_u$ be the set of all possible labeled samples constructed by adding labels to $S_u$, subject to the constraints  $R_B(P_{XY})=0$ and $R_E(P_{XY})\geq\epsilon_E$. If these constraints hold, we have for all $h \in H$:

$\beta$

$:=\underset{i,j}{\min}\frac{1}{m_u}\sum\limits_{\{x,y\} \in S_u^{ij}}L(h^{ij}(x),y)$

$\leq\underset{i,j}{\min}\frac{1}{m_u}\sum\limits_{\{x,y\} \in S_u^{ij}}L(h(x),y)$

$\leq\underset{S_u^{*} \in \mathcal{S}_u}{\min}\frac{1}{m_u}\sum\limits_{\{x,y\} \in S_u^{*}}L(h(x),y)$.

$=\underset{S_u^{*} \in \mathcal{S}_u}{\min}\hat{R}(h)$.

The first inequality holds because $h_{ij}$ was constructed through empirical risk minimization guaranteed to find the hypothesis in $H$ with the minimum empirical risk over $S_u^{ij}$. The second inequality holds since $R_B(P_{XY})=0$ implies that the labels of points within clusters agree and $R_E(P_{XY})\geq\epsilon_E>0$ implies that at least one pair of clusters must have different labels. The third equality holds by the definition of empirical risk $\hat{R}(h)$ for some sample $S_u^{*}$.

Using this inequality and the high probability lower bound on $R(h)$ in terms of $\hat{R}(h)$ provided by Lemma \ref{infinite_lemma}, noting that the VC dimension of the class of linear classifiers in $n$ dimensions is $n+1$, yields the required result.
\end{proof}

\section{Example: Manifold representation}
\label{manifold}

We present an example which exploits low dimensional manifold structure present in the unlabeled distribution to learn a new representation, as shown in Figure \ref{manifold_Assumption_fig}. A toy setting for this example is the binary classification problem of determining whether there are crocodiles in a particular section of a river. Crocodiles tend to concentrate in particular regions of the river, since they swim along it and cannot move over land. A representation parametrized by river position rather than latitude and longitude will allow a nearest-neighbor algorithm to better learn where there are crocodiles.

A probabilistic upper bound on the risk of a subsequent supervised learner using the learned representation is shown in Theorem \ref{manifold_theorem}, which instantiates Theorem \ref{sample_complexity_theorem}. This result will be meaningful when the bound is less than 1 (see Figure \ref{example_results_fig} for an example). In this case the learner is the 1-nearest neighbor algorithm using Euclidean distance. Our approach is similar to density-based distances \citep{bijral_semi-supervised_2012}, except that we use the density to learn an explicit representation and continue to use Euclidean distance in the learned feature space.

The parameters used for the manifold example in Section \ref{manifold} are summarized in Table \ref{manifold_table}. Proofs for Theorem \ref{manifold_theorem} and the lemmas used are shown subsequently.  For a given set of parameters, we plot the upper bound on risk using the learned features in Figure \ref{example_results_fig} (left), which depends both on the number of unlabeled points $m_u$ and the number of labeled points $m_l$.

\begin{table}[t]
\small
\begin{center}
\begin{tabular}{p{1cm}|p{2cm}|p{5.9cm}|p{3.5cm}}

\bf Condition & \bf Description & \bf Risk function $R$ & \bf Threshold $\epsilon$ \\ \hline
\multicolumn{4}{l}{\it Upper bound on risk using representation learning} \\ \hline
$A(P_X)$ & If property test passes, marginal distribution is concentrated on one-dimensional manifold &
 $R_a(P_X)=\mathbb{E}_{x \sim P_X}[L_a(x)]$, where $L_a(x)=\mathbf{1}(\underset{x' \in X_a}{\min}(||x-x'||_2)>\frac{s\sqrt{n}}{2})$.  $\hat{R}_a(P_X)$ is the result of Algorithm \ref{manifold_Algorithm} and $R_A(P_X)=\underset{X_a \in \mathcal{X}_A}{\min}R_a(P_X)$. See the proof for the definition of $\mathcal{X}_A$.& $\hat{\epsilon}_A=0$ \newline $\epsilon_A=\frac{1}{m_u}(\frac{n\gamma}{s}\log 3-n\log s + \log\frac{3}{\delta})$\\ \hline
$B(P_{XY})$ & Nearby points measured by distance along the manifold are likely to share labels &$R_B(P_{XY})=\mathbb{E}_{\{x,y\} \sim P_{XY}}[\underset{\{x',y'\}}{\max}L_B(y,y')]$, such that $p(x',y')>0 \wedge d_{r,\sqrt{n}s}(x,x')\leq(j+1)\sqrt{n}s$ and where $L_B(y,y')=\mathbf{1}(y\neq y')$. See Section \ref{manifold_PXY_condition} for definitions of $r$, $j$ and the function $d_{r,\sqrt{n}s}$. & $\epsilon_B\in[0,1]$ is specified by a domain expert \\ \hline
$C(f_L)$ &  Feature learner uses manifold structure to learn new representation  &  $R_C(f)=\mathbb{E}_{x \sim P_X}[\underset{x': |f(x)-f(x')|\leq js}{\max}L_C(x,x')]$, where $L_C(x,x')=\mathbf{1}(d_{r,\sqrt{n}s}(x,x')> (j+1)\sqrt{n}s)$
  & $\epsilon_C=\epsilon_A$\\ \hline
\rule{0pt}{3ex}$D(h_L)$ & 1-nearest neighbor learner can exploit manifold representation & $R(h^Z \circ f)=\mathbb{E}_{\{x,y\} \sim P_{XY}}[L(h^Z(f(x)),y)]$ where $L(y,y')=\mathbf{1}(y\neq y')$ & $\epsilon_{\max}^Z=\epsilon_B+\epsilon_C+\underset{t \in [0,1]}{\max}(\frac{\gamma}{js}-\frac{\delta}{3}(1-t)^{-m_l})t$ \\
\end{tabular}
\end{center}
\caption{Parameters for manifold example} \label{manifold_table}
\end{table}

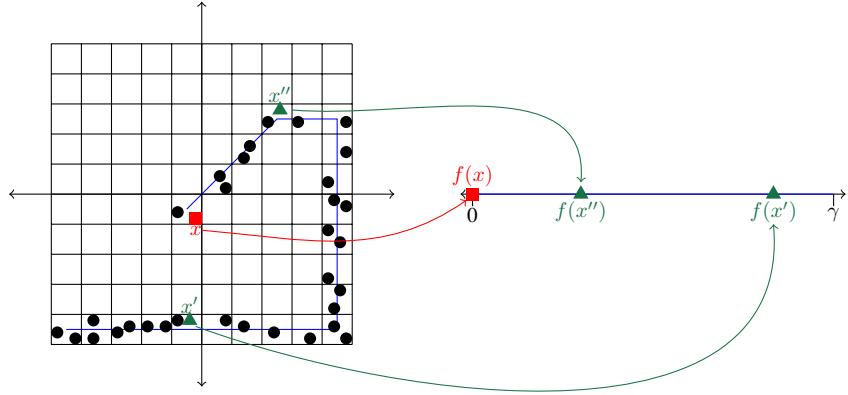
\begin{figure}[t]
\centering

\begin{tikzpicture}[scale=0.8, transform shape]
\draw [<->] (-3.2,0) -- (3.2,0);
\draw [<->] (0,-3.2) -- (0,3.2);

\draw [<->] (4.3,0) -- (10.7,0);
\draw [-] (4.5,-0.2) -- node[below] {0} (4.5,0);
\draw [-] (10.5,-0.2) -- node[below] {$\gamma$} (10.5,0);
\draw [-,blue] (4.5,0) -- (10.5,0);

\node [above,color=red] at (4.5,0) {$f(x)$};
\path [draw=none,fill=red] (4.4,-0.1) rectangle (4.6,0.1);
\node [below,color=darkspringgreen] at (6.3,0) {$f(x'')$};
\node[fill=darkspringgreen,regular polygon, regular polygon sides=3,inner sep=1.5pt] at (6.3,0) {};
\node [below,color=darkspringgreen] at (9.5,0) {$f(x')$};
\node[fill=darkspringgreen,regular polygon, regular polygon sides=3,inner sep=1.5pt] at (9.5,0) {};

\draw [-,blue] (-0.25,-0.25)-- (0.25,0.25)-- (0.75,0.75)-- (1.25,1.25)-- (1.75,1.25)--(2.25,1.25)-- (2.25,0.75)-- (2.25,0.25)
-- (2.25,-0.25)-- (2.25,-0.75) -- (2.25,-1.25)-- (2.25,-1.75)-- (2.25,-2.25)-- (1.75,-2.25)-- (1.25,-2.25)-- (0.75,-2.25)-- (0.25,-2.25)-- (-0.25,-2.25)-- (-0.75,-2.25)--(-1.25,-2.25)
--(-1.75,-2.25)--(-2.25,-2.25);

\draw[step=0.5,black,thin] (-2.5,-2.5) grid (2.5,2.5);

\path [draw=none,fill=black] (2.4,-2.4) circle (0.1);

\path [draw=none,fill=black] (2.2,-2.2) circle (0.1);

\path [draw=none,fill=black] (2.2,-1.9) circle (0.1);

\path [draw=none,fill=black] (2.3,-1.6) circle (0.1);

\path [draw=none,fill=black] (2.1,-1.4) circle (0.1);

\path [draw=none,fill=black] (2.3,-0.8) circle (0.1);

\path [draw=none,fill=black] (2.1,-0.6) circle (0.1);

\path [draw=none,fill=black] (2.4,-0.2) circle (0.1);

\path [draw=none,fill=black] (2.2,-0.1) circle (0.1);

\path [draw=none,fill=black] (2.1,0.2) circle (0.1);

\path [draw=none,fill=black] (2.4,0.7) circle (0.1);

\path [draw=none,fill=black] (2.4,1.2) circle (0.1);

\path [draw=none,fill=black] (1.6,1.2) circle (0.1);

\path [draw=none,fill=black] (1.1,1.2) circle (0.1);

\path [draw=none,fill=black] (0.7,0.6) circle (0.1);

\path [draw=none,fill=black] (0.3,0.3) circle (0.1);

\path [draw=none,fill=black] (-0.4,-0.3) circle (0.1);

\path [draw=none,fill=black] (0.4,0.1) circle (0.1);

\path [draw=none,fill=black] (0.8,0.8) circle (0.1);

\path [draw=none,fill=black] (0.4,-2.1) circle (0.1);

\path [draw=none,fill=black] (0.7,-2.2) circle (0.1);

\path [draw=none,fill=black] (1.2,-2.3) circle (0.1);

\path [draw=none,fill=black] (1.8,-2.4) circle (0.1);

\path [draw=none,fill=black] (-0.4,-2.1) circle (0.1);

\path [draw=none,fill=black] (-0.6,-2.2) circle (0.1);

\path [draw=none,fill=black] (-0.9,-2.2) circle (0.1);

\path [draw=none,fill=black] (-1.2,-2.2) circle (0.1);

\path [draw=none,fill=black] (-1.4,-2.3) circle (0.1);

\path [draw=none,fill=black] (-1.8,-2.1) circle (0.1);

\path [draw=none,fill=black] (-1.8,-2.4) circle (0.1);

\path [draw=none,fill=black] (-2.1,-2.4) circle (0.1);

\path [draw=none,fill=black] (-2.4,-2.3) circle (0.1);

\node [below,color=red] at (-0.1,-0.4) {$x$};

\path [draw=none,fill=red] (-0.2,-0.5) rectangle (0,-0.3);

\node [above,color=darkspringgreen] at (1.3,1.4) {$x''$};

\node[fill=darkspringgreen,regular polygon, regular polygon sides=3,inner sep=1.5pt] at (1.3,1.4) {};

\node [above,color=darkspringgreen] at (-0.2,-2.1) {$x'$};

\node[fill=darkspringgreen,regular polygon, regular polygon sides=3,inner sep=1.5pt] at (-0.2,-2.1) {};

\draw[->,darkspringgreen] (1.5,1.4) to[out=-5,in=85] (6.3,0.2);
\draw[->,darkspringgreen] (-0.1,-2.2) to[out=-20,in=-85] (9.5,-0.5);
\draw[->,red] (0,-0.6) to[out=-5,in=-145] (4.4,-0.1);

\end{tikzpicture}
\vspace{-20pt}
\caption{Example of a map $f$ (arrows) from the original input space $X=\mathbb{R}^2$ (left) to the feature space $Z=\mathbb{R}$ (right). The manifold (blue line, left) learned from the unlabeled sample (black dots, left)  is used to represent the data in the interval $[0,\gamma]$ (blue line, right). In the original space a 1-nearest-neighbor classifier with Euclidean distance uses the label of training point $x'$ to classify the test point $x$ since $||x-x'||_2<||x-x''||_2$, while in the learned feature space it uses the label of training point $x''$ since $|f(x)-f(x'')|<|f(x)-f(x')|$. With high probability, $x$ and $x''$ share labels (by Assumption \ref{manifold_PXY_Assumption}), using the parameter $j=3$.
} \label{manifold_Assumption_fig}
\vspace{-15pt}
\end{figure}

\begin{theorem}
\label{manifold_theorem}
Let $\hat{R}_a(P_X)$ be the result of the manifold property test described in Algorithm \ref{manifold_Algorithm} run on an unlabeled sample $S_u$. Let $s$ be a side length parameter and $\gamma$ be a manifold length parameter (see Section \ref{manifold_PX_condition}). Let $j$ be a proximity parameter and $\epsilon_B$ be a label agreement parameter (see Section \ref{manifold_PXY_condition}). Let $\epsilon_{\max}^Z:=\frac{1}{m_u}(\frac{n\gamma}{s}\log 3-n\log s + \log\frac{3}{\delta})+\epsilon_B+\underset{t \in [0,1]}{\max}(\frac{\gamma}{js}-\frac{\delta}{3}(1-t)^{-m_l})t$. Suppose $\hat{R}_a(P_X)=0$, $P_{XY}$ satisfies Assumption \ref{manifold_PXY_Assumption}, $f_L$ is the feature learner in Section \ref{manifold_fL_Condition}, and $h_L$ is the hypothesis learner in Section \ref{manifold_hL_property}. Then if a hypothesis $h^Z \circ f$ is constructed from $S_u$ and a labeled sample $S_l$, with probability at least $1-\delta$, $R(h^Z \circ f) \leq\epsilon_{\max}^Z$.
\end{theorem}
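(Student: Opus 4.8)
The plan is to follow exactly the template of the proof of Theorem~\ref{cluster_theorem}: Theorem~\ref{manifold_theorem} is simply the instantiation of the general Theorem~\ref{sample_complexity_theorem} in the manifold setting, so it suffices to verify that conditions $A(P_X)$, $B(P_{XY})$, $C(f_L)$ and $D(h_L)$ of Table~\ref{conditions_table} all hold --- three of them with high probability, one by hypothesis --- with the risk functions and thresholds recorded in Table~\ref{manifold_table}, then to take a union bound, and finally to quote Theorem~\ref{sample_complexity_theorem}.

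Concretely, I would invoke the manifold analogues of the cluster component lemmas. Lemma~\ref{manifold_PX_theorem} gives condition $A(P_X)$: with probability at least $1-\delta/3$, $\hat{R}_a(P_X)=0 \implies R_a(P_X)\le\epsilon_A \wedge R_A(P_X)\le\epsilon_A$, where $\epsilon_A=\frac{1}{m_u}(\frac{n\gamma}{s}\log 3 - n\log s + \log\frac{3}{\delta})$ arises from a finite-hypothesis-class argument over the grid-aligned manifold-cover configurations together with Lemma~\ref{finite_lemma} at zero empirical error. Assumption~\ref{manifold_PXY_Assumption}, which is part of the hypothesis of the theorem, supplies condition $B(P_{XY})$, namely $R_A(P_X)\le\epsilon_A \implies R_B(P_{XY})\le\epsilon_B$. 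Lemma~\ref{manifold_fL_lemma} supplies condition $C(f_L)$, that the learned arc-length reparametrization map $f$ satisfies $R_C(f)\le\epsilon_C=\epsilon_A$, using a high-probability comparison of the empirical bin masses $\hat p_c$ with the true masses $p_c$. Finally, Lemma~\ref{manifold_hL_lemma} supplies condition $D(h_L)$: combining $R_B(P_{XY})\le\epsilon_B$, $R_C(f)\le\epsilon_C$, and Lemma~\ref{alpha_lemma} applied with $\gamma/(js)$ bins --- the learned interval $[0,\gamma]$ being covered by about $\gamma/(js)$ segments of width $js$ --- yields, with probability at least $1-\delta/3$, $R(h^Z\circ f)\le\epsilon_B+\epsilon_C+\max_{t\in[0,1]}(\frac{\gamma}{js}-\frac{\delta}{3}(1-t)^{-m_l})t=\epsilon_{\max}^Z$.

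I would then union-bound over the failure events of the three high-probability lemmas, each of probability at most $\delta/3$, so that with probability at least $1-\delta$ all of $A$, $B$, $C$ and $D$ hold simultaneously, and apply Theorem~\ref{sample_complexity_theorem} verbatim to conclude $R(h^Z\circ f)\le\epsilon_{\max}^Z$; substituting $\epsilon_C=\epsilon_A$ reproduces the expression stated in the theorem.

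As the paper notes, the main-theorem proofs amount to stringing the conditions together, so there is no genuine obstacle at the level of Theorem~\ref{manifold_theorem} itself --- all the real work lives in Lemmas~\ref{manifold_PX_theorem}, \ref{manifold_fL_lemma} and \ref{manifold_hL_lemma}. The only place where care is needed is the bookkeeping: confirming that exactly three high-probability statements each consume a $\delta/3$ share of the confidence budget (so the union bound lands at $1-\delta$), that Assumption~\ref{manifold_PXY_Assumption} delivers condition $B$ at no probability cost, and --- the most error-prone point --- that the number of bins handed to Lemma~\ref{alpha_lemma} is $\gamma/(js)$ rather than $\gamma/s$, since that factor is exactly what makes the assembled bound coincide with the stated $\epsilon_{\max}^Z$.
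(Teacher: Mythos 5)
Your proposal matches the paper's own proof of Theorem~\ref{manifold_theorem} essentially verbatim: it invokes Lemma~\ref{manifold_PX_theorem}, Assumption~\ref{manifold_PXY_Assumption}, Lemma~\ref{manifold_fL_lemma} and Lemma~\ref{manifold_hL_lemma} for conditions $A$--$D$, takes a union bound over the three $\delta/3$ failure events, and applies Theorem~\ref{sample_complexity_theorem}. The bookkeeping you flag (the $\gamma/(js)$ bin count and $\epsilon_C=\epsilon_A$) is exactly what makes the assembled bound equal the stated $\epsilon_{\max}^Z$, so the argument is correct and identical in approach.
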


\begin{proof}
With probability at least $1-\delta$, by the union bound, we have $A(P_X)$ by Lemma \ref{manifold_PX_theorem}, $B(P_{XY})$ by Assumption \ref{manifold_PXY_Assumption}, $C(f_L)$ by Lemma \ref{manifold_fL_lemma} and $D(h_L)$ by Lemma \ref{manifold_hL_lemma}. Applying Theorem \ref{sample_complexity_theorem} we have $R(h^Z \circ f) \leq\epsilon_{\max}^Z$.
\end{proof}

In principle it should also be possible to provide a lower bound on the performance of the subsequent supervised learner using the original inputs. We found that the distribution-independent upper bound presented here, which is tighter with more labeled samples, is not tight enough to be less than such a distribution-specific lower bound, which is tighter with few labeled samples (see Appendix \ref{motivation}). In future our upper bound may be tightened by creating dependence on the distribution $P_X$.

\begin{figure}[t]
\includegraphics[scale=0.23]{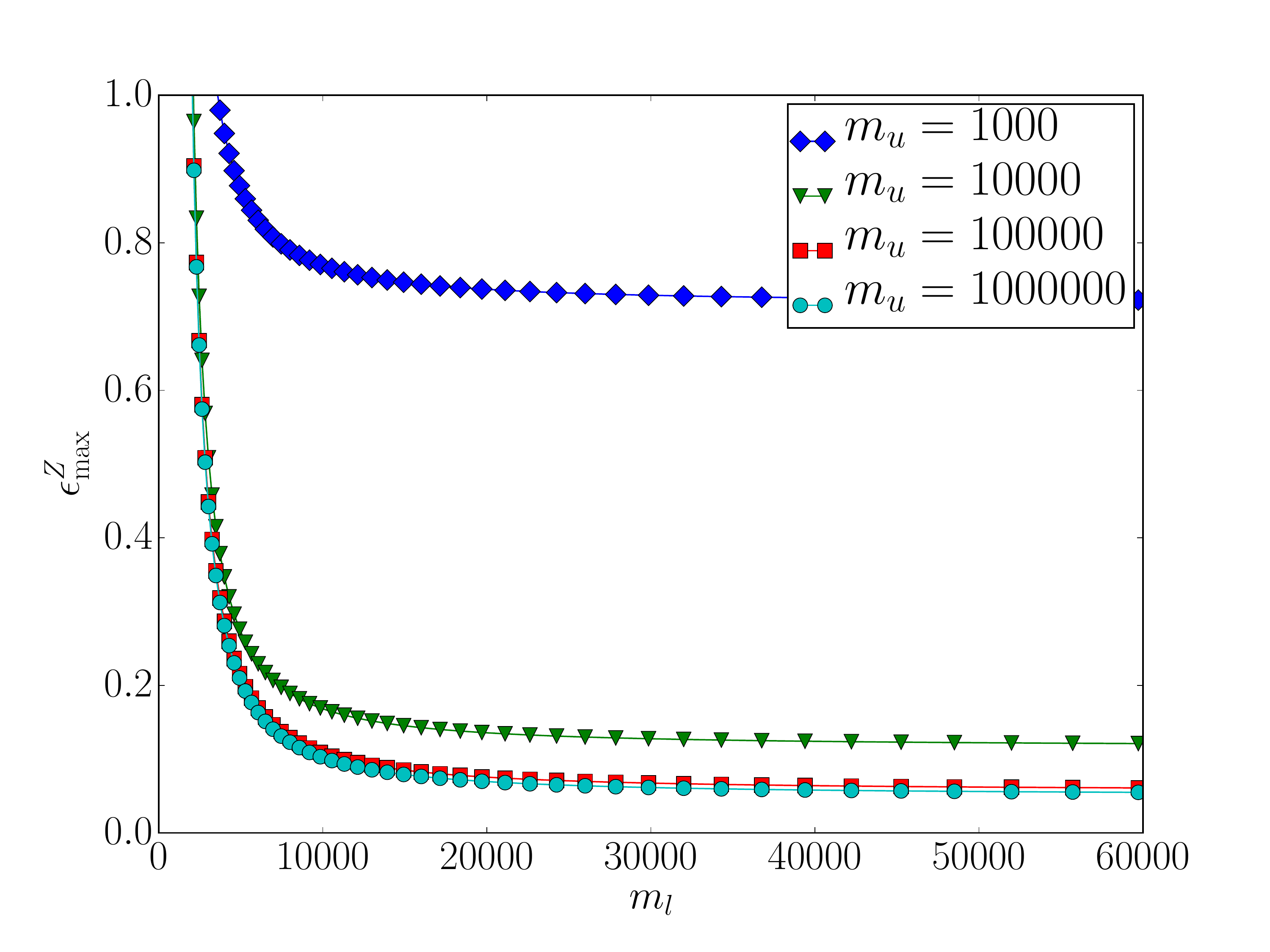}
\includegraphics[scale=0.23]{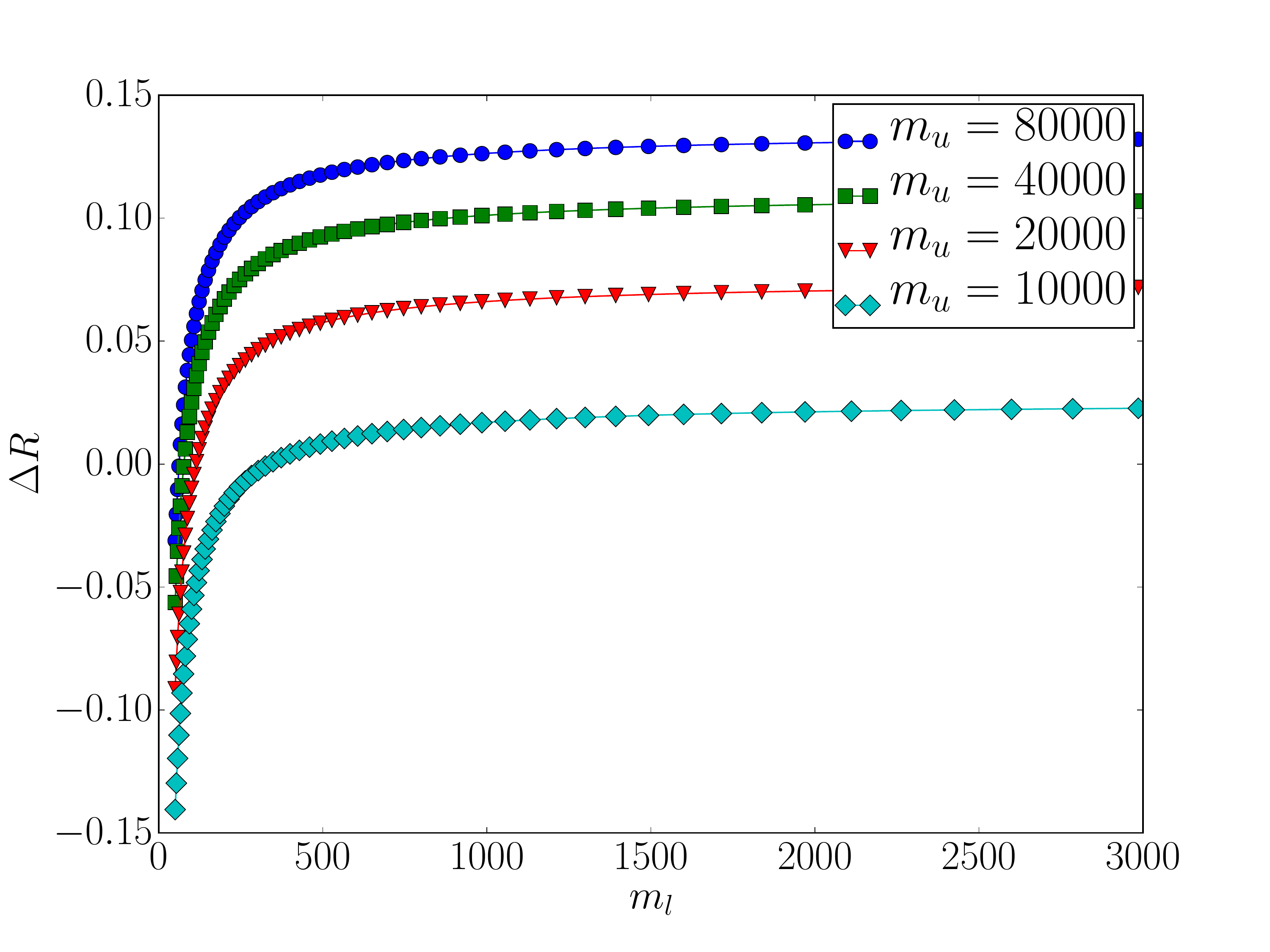}
\caption{Risk upper bound $\epsilon_{\max}^Z$ for manifold example calculated using Theorem \ref{manifold_theorem} (left), where the parameters are $\epsilon_B=0.05$, $j=3$, $s=0.1$, $n=2$, $\gamma=20$  and $\delta=0.05$. Risk gap $\Delta R$ for cluster example calculated using Theorem \ref{cluster_theorem} (right), where the parameters are $\beta=0.2$, $s=0.1$, $n=2$, $k=2$ and $\delta=0.05$.} \label{example_results_fig}
\end{figure}

\subsection{Condition $A(P_X)$}
\label{manifold_PX_condition}

We consider a property which measures the extent to which $P_X$ is concentrated on particular type of one-dimensional manifold, and describe an algorithm for testing this property from an unlabeled sample. The quantity $R_A(P_X) \in [0,1]$, defined below, describes the extent to which the property holds, with $R_A(P_X)=0$ indicating that it perfectly holds. While we analyze a particular class of one-dimensional manifolds, in future we envisage that this restriction can be relaxed.

Given a distribution $P_X$, a hypercube side length parameter $s$ and a manifold length parameter $\gamma$, let $\mathcal{X}_A$ be the set of all regions that form a one-dimensional manifold on which $P_X$ is approximately concentrated (see the proof for a formal definition). For some region $X_a$, let $L_a(x):=\mathbf{1}(\underset{x' \in X_a}{\min}||x-x'||_2>\frac{s\sqrt{n}}{2})$ and $\hat{R}_a(P_X)$ be a quantity calculated by the property test described in Algorithm \ref{manifold_Algorithm}, where if $\hat{R}_a(P_X)=0$ then $L_a(x)=0$ for all $x \in S_u$. Let $R_a(P_X):=\mathbb{E}_{x \sim P_X}[L_a(x)]$ and $R_A(P_X):=\underset{X_a \in \mathcal{X}_A}{\min} R_a(P_X)$.

\begin{lemma}
\label{manifold_PX_theorem}
Let $S_u$ be a sample drawn from $P_X$ and let $\hat{R}_a(P_X)$ be calculated using $S_u$ and the property test described in Algorithm \ref{manifold_Algorithm}. Let $\hat{\epsilon}_A:=0$ and $\epsilon_A:=\frac{1}{m_u}(\frac{n\gamma}{s}\log 3-n\log s + \log\frac{3}{\delta})$. With probability at least $1-\frac{\delta}{3}$, $\hat{R}_a(P_X)\leq\hat{\epsilon}_A \implies R_a(P_X)\leq \epsilon_A \wedge R_A(P_X)\leq\epsilon_A$.
\end{lemma}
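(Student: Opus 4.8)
The plan is to reduce the claim about $R_a(P_X)$ and $R_A(P_X)$ to a statement about a finite hypothesis class, in exactly the same style as the proof of Lemma~\ref{cluster_PX_theorem}. First I would fix the formal definition of $\mathcal{X}_A$ that was deferred: $\mathcal{X}_A$ should be the collection of unions of hypercubes (of side $s$) which, when interpreted as a discretized curve, form a one-dimensional manifold of total length at most $\gamma$ and on which the sample $S_u$ could plausibly be concentrated. Because the ambient space is $[0,1]^n$ chopped into hypercubes of side $s$, and a discretized one-dimensional manifold of length $\gamma$ visits at most $\gamma/s$ such hypercubes, at each step having a bounded number of `next-cell' choices (of order $3^n$ directions, giving the $\tfrac{n\gamma}{s}\log 3$ term, with the $-n\log s$ term coming from the number of possible starting cells), the cardinality of $\mathcal{X}_A$ is at most $\exp\!\big(\tfrac{n\gamma}{s}\log 3 - n\log s\big)$. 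Getting this counting bound right — matching precisely the constant that appears in $\epsilon_A$ — is the step I expect to be the main obstacle, since it requires pinning down exactly what `one-dimensional manifold of length $\gamma$ discretized at scale $s$' means combinatorially and checking that the encoding is tight.

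Once $\mathcal{X}_A$ and its cardinality bound are fixed, the remainder mirrors Lemma~\ref{cluster_PX_theorem} almost verbatim. For each $X_a \in \mathcal{X}_A$ I would define a hypothesis $h_a : X \to \{0,1\}$ by $h_a(x) = 1$ iff $\min_{x' \in X_a}\|x-x'\|_2 \le \tfrac{s\sqrt n}{2}$, i.e.\ iff $x$ lies in (the $\tfrac{s\sqrt n}{2}$-thickening of) the manifold $X_a$; equivalently, up to the discretization, $h_a(x)=1$ iff $x$ lies in one of the hypercubes comprising $X_a$. Let $H_A := \{h_a : X_a \in \mathcal{X}_A\}$, so $|H_A| \le \exp\!\big(\tfrac{n\gamma}{s}\log 3 - n\log s\big)$. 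With $L(h_a(x),1) = L_a(x)$ by construction, we have $\hat R(h_a) = \tfrac{1}{m_u}\sum_{x\in S_u} L_a(x)$, and the hypothesis $X_a$ returned by Algorithm~\ref{manifold_Algorithm} when $\hat R_a(P_X)=0$ satisfies $L_a(x)=0$ for all $x\in S_u$, hence $\hat R(h_a)=0$ for that particular $h_a$.

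I would then apply Lemma~\ref{finite_lemma} to the finite class $H_A$ with the realizable ($\hat R = 0$) hypothesis $h_a$, with confidence parameter $\tfrac{\delta}{3}$: with probability at least $1 - \tfrac{\delta}{3}$,
\[
R_a(P_X) = R(h_a) \le \frac{1}{m_u}\Big(\log|H_A| + \log\tfrac{3}{\delta}\Big) \le \frac{1}{m_u}\Big(\tfrac{n\gamma}{s}\log 3 - n\log s + \log\tfrac{3}{\delta}\Big) = \epsilon_A,
\]
which is precisely the stated bound. Finally, since $R_A(P_X) = \min_{X_a \in \mathcal{X}_A} R_a(P_X) \le R_a(P_X)$ for the particular $X_a$ returned by the algorithm, the same bound holds for $R_A(P_X)$, completing the implication $\hat R_a(P_X) \le \hat\epsilon_A \implies R_a(P_X) \le \epsilon_A \wedge R_A(P_X) \le \epsilon_A$. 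The only genuinely new content relative to the cluster case is the manifold-counting argument in the first paragraph; everything downstream is the standard finite-class realizable generalization bound.
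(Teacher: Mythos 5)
Your proposal matches the paper's own proof essentially step for step: the same definition of $h_a$ via the $\tfrac{s\sqrt{n}}{2}$-thickening of $X_a$, the same counting bound $|H_A|\leq s^{-n}(3^n)^{\gamma/s}$ obtained from $s^{-n}$ starting hypercubes and at most $\gamma/s$ steps with $3^n$ choices each, and the same application of the realizable finite-class bound (Lemma~\ref{finite_lemma}) at confidence $\tfrac{\delta}{3}$, followed by passing to $R_A(P_X)$ via the minimum. The counting step you flagged as the main obstacle is handled in the paper exactly as you sketched it.
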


First, given a distribution $P_X$, a hypercube side length parameter $s$ and a manifold length parameter $\gamma$, we define $\mathcal{X}_A$ be the set of all regions $X_a \subseteq X$ such that:
\begin{enumerate}
\item $X_a$ is a connected, non-self-intersecting curve of length at most $\gamma$
\item For all $x \in X_a$, there exists some point $x'$ supported by $P_X$ such that $||x-x'||_2<\frac{s\sqrt{n}}{2}$.
\end{enumerate}

The algorithm for testing for and learning a one-dimensional manifold is presented in Algorithm \ref{manifold_Algorithm}. The algorithm adopts a depth-first search strategy and returns failure if it cannot find a one-dimensional manifold that all of $S_u$ lies near. In future this restriction could be relaxed to find a manifold that \textit{most} of the data lies near, allowing us to consider the case where $\hat{\epsilon}_A>0$.

\begin{algorithm}[h!]

 \caption{Testing for and learning a one-dimensional manifold}
 \label{manifold_Algorithm}
 \KwData{unlabeled sample $S_u$, input space $X$, side length $s$, manifold length $\gamma$}
 \KwResult{Curve ($X_a$) and proportion of $S_u$ not concentrated on $X_a$ ($\hat{R}_a$), or failure}
 \textit{Divide $X$ into hypercubes of side $s$} \\
 \For {each hypercube c}{
   \If{$c$ is not empty}{
   \textit{path}$:=$\textit{explore(c,0)}\\
   \If{path is not failure}{
      $X_a:=$ \textit{curve connecting the centers of the hypercubes in the order specified by path}
      $\hat{R}_a:=0$ \\
      \Return $\{X_a,\hat{R}_a\}$
   }
  }
 }
\Return \textit{failure} \\~\\
\SetKwProg{myproc}{Function}{}{}
\myproc{explore(path,path\_distance)}{

	\If{path\_distance$>\gamma$}{
		\Return \textit{failure}
	}
	\For{$i=$1:length(path)-n}{
		\If {path[length(path)]=path[i]}{
			\Return \textit{failure}
		}
	}
	\If{all squares not on path are empty}{
		\Return \textit{path}
	}
	\For {each non-empty neighbor of path[length(path)] not currently on path}{
		\textit{new\_path}$:=$[\textit{path},\textit{neighbor}]\\
		\textit{distance\_added}$:=$\textit{distance from path[length(path)] to center of neighbor}\\
		\textit{new\_path\_distance}$:=$\textit{path\_distance+distance\_added}\\
		\If{explore(\textit{new\_path}) is not failure}{
			\Return \textit{explore(new\_path,new\_path\_distance)}
		}
	}

 }
 \Return \textit{failure}
\end{algorithm}

We now state the proof of Lemma \ref{manifold_PX_theorem}.

\begin{proof}
If $\hat{R}_a(P_X)=0$, then Algorithm \ref{manifold_Algorithm} succeeds and also returns some $X_a \in \mathcal{X}_A$.
For each region $X_a \in \mathcal{X}_A$ define a hypothesis $h_a$, where $h_a(x)=1$ if $\mathbf{1}(\underset{x' \in X_a}{\min}||x-x'||_2\leq\frac{s\sqrt{n}}{2})$, and $h_a(x)=0$ otherwise. Let $H_A$ be the set of such hypotheses, where $|H_A|\leq s^{-n}(3^n)^{\frac{\gamma}{s}}$, since the path may start at any of the $s^{-n}$ hypercubes and then may take at most $\frac{\gamma}{s}$ steps, each of which must be to one of at most $3^n$ options including neighboring hypercubes or remaining at the same hypercube if the path length is less than $\gamma$. Let $R(h_a):=\mathbb{E}_{x \sim P_X}[L(h_a(x),1)]$ and $\hat{R}(h_a):=\frac{1}{m_u}\sum\limits_{x \in S_u}L(h_a(x),1)$, recalling that $L$ is 0/1 loss.

Because $\hat{R}(h_a)=\frac{1}{m_u}\sum\limits_{x \in S_u}L_a(x)$ by definition, and $\frac{1}{m_u}\sum\limits_{x \in S_u}L_a(x)=0$ if $\hat{R}_a(P_X)=0$ by the construction of Algorithm \ref{manifold_Algorithm}, we have $\hat{R}(h_a)=0$. Hence we may apply Lemma \ref{finite_lemma}, which yields the required high probability upper bound on $R(h_a)$. Since $R(h_a)=R_a(P_X)$ by definition, the same bound holds for $R_a(P_X)$. The bound also holds for $R_A(P_X)$, since $R_A(P_X):=\underset{X_a \in \mathcal{X}_A}{\min}R_a(P_X)$.
\end{proof}

\vspace{-15pt}
\subsection{Condition $B(P_{XY})$}
\label{manifold_PXY_condition}

Given that the data lies on a low dimensional manifold, we assume that close points as measured by distance along the manifold are likely to share labels. The discovery of manifold structure alone may not be useful for prediction, unless the joint distribution also exhibits shared structure.

Let $\mathcal{C}$ be the set of all hypercubes containing at least one point in $S_u$, $p_c$ be the probability mass within hypercube $c$ and $\hat{p}_c$ be the empirical estimate of this mass obtained from the sample $S_u$. Let $r \leq\underset{c \in \mathcal{C}}{\min}(\min\{p_c | P[Bin(m_u;p_c) \geq \hat{p}_cm_u]\geq \frac{s \delta }{3\gamma}\})$ be a parameter indicating the minimum probability mass contained in small regions along the manifold. Let $d_{r,\sqrt{n}s}(x,x')$ be the length of the shortest path between $x$ and $x'$ such that for all points $x''$ along the path the following condition holds: there exists a region $T$ such that $\underset{x_T \in T}{\max}||x''-x_T||_2\leq\sqrt{n}s$ and $\int_Tp(x_T)dx_T\geq r$. Let $j$ be a parameter controlling how close points must be such that they are likely to share labels.

Let $R_B(P_{XY}):=\mathbb{E}_{\{x,y\} \sim P_{XY}}[\underset{\{x',y'\}}{\max}L_B(y,y')]$, where the maximization is over points $\{x',y'\}: p(x',y')>0 \wedge d_{r,\sqrt{n}s}(x,x')\leq(j+1)\sqrt{n}s$, and $L_B(y,y'):=\mathbf{1}(y\neq y')$. A small value of $R_B(P_{XY})$ indicates that for most points $x$, all points $x'$ that are close in terms of manifold distance in the sense that $d_{r,\sqrt{n}s}(x,x')\leq (j+1)\sqrt{n}s$, will have the same label as $x$.

\begin{assumption}
\label{manifold_PXY_Assumption}
Let $\epsilon_B\in[0,1]$ be specified by a domain expert. Assume that $R_A(P_X)\leq\epsilon_A \implies R_B(P_{XY})\leq\epsilon_B$.
\end{assumption}

\subsection{Condition $C(f_L)$}
\label{manifold_fL_Condition}

We specify a feature learner $f_L$  which, if one-dimensional manifold structure can be detected, exploits this structure to learn a representation. Define $f_L$ as follows, yielding the feature map $f$.  Run the property test described in Algorithm \ref{manifold_Algorithm}, which we assume passes and returns the curve $X_a$. Let $x_0$ be the center of the first hypercube on the curve and set $f(x)=0$ for all points in this hypercube. For points $x$ lying in other hypercubes through which $X_a$ passes, set $f(x)$ to be the distance along $X_a$ from $x_0$ to the center of the point's hypercube. For points $x$ lying in hypercubes whose centers are not in $X_a$, set $f(x)$ to be some constant such that $f(x)\ll -\gamma$.

We would now like to quantify the probability that if $f(x)$ and $f(x')$ are close, points $x$ and $x'$ are close in the original space as measured by the shortest distance of a path between them through regions of high probability density. Let $R_C(f):=\mathbb{E}_{x \sim P_X}[\underset{x': |f(x)-f(x')|\leq js}{\max}L_C(x,x')]$ and $L_C(x,x'):=\mathbf{1}(d_{r,\sqrt{n}s}(x,x')> (j+1)\sqrt{n}s)$. Note that the definition of $R_C(f)$ depends on the parameter $r$, which is upper bounded by an expression dependent on $\delta$.

\begin{lemma}
\label{manifold_fL_lemma}
Let $\epsilon_C:=\epsilon_A$. With probability at least $1-\frac{\delta}{3}$, $R_a(P_X)\leq\epsilon_A \implies R_C(f)\leq\epsilon_C$.
\end{lemma}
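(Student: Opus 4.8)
The plan is to reduce the lemma to the pointwise inequality
\[
\underset{x':\,|f(x)-f(x')|\leq js}{\max}L_C(x,x')\;\leq\;L_a(x)
\]
holding for $P_X$-almost every $x$ on a single high-probability event. Granting this, $R_C(f)=\mathbb{E}_{x\sim P_X}[\underset{x'}{\max}\,L_C(x,x')]\leq\mathbb{E}_{x\sim P_X}[L_a(x)]=R_a(P_X)$, so whenever the hypothesis $R_a(P_X)\leq\epsilon_A$ holds we get $R_C(f)\leq\epsilon_A=\epsilon_C$; randomness enters only through the pointwise bound, which I would establish with probability at least $1-\tfrac{\delta}{3}$. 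I would prove it in a probabilistic step that controls the density parameter $r$ and a deterministic geometric step.

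\emph{Step 1 (controlling $r$).} The feature learner builds the curve $X_a$ only out of hypercubes containing at least one point of $S_u$, and a path valid for $d_{r,\sqrt n s}$ must stay within $\sqrt n s$ of regions of $P_X$-mass at least $r$; so the goal is that with probability at least $1-\tfrac{\delta}{3}$ every hypercube traversed by $X_a$ has true mass $p_c\geq r$. For a fixed hypercube $c$, the quantity $\min\{p:\mathbb{P}[\mathrm{Bin}(m_u;p)\geq\hat p_c m_u]\geq\tfrac{s\delta}{3\gamma}\}$ is the lower endpoint of a one-sided exact (Clopper--Pearson type) binomial confidence interval for $p_c$; since $p\mapsto\mathbb{P}[\mathrm{Bin}(m_u;p)\geq k]$ is nondecreasing, the event that $p_c$ falls below this endpoint coincides with the event that the observed count $\hat p_c m_u$ reaches a level whose upper binomial tail under $p_c$ is $<\tfrac{s\delta}{3\gamma}$, hence has probability $<\tfrac{s\delta}{3\gamma}$. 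As $r$ is the minimum of these endpoints over $c\in\mathcal{C}$, and the learned curve has length at most $\gamma$ and so visits at most $\gamma/s$ hypercubes, a union bound over those hypercubes gives the claim with total failure probability at most $\tfrac{\delta}{3}$ (one should check the cardinality bound is distribution-free by using only the length-$\leq\gamma$ constraint).

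\emph{Step 2 (geometric bound, on the Step 1 event).} Fix $x$ with $L_a(x)=0$, so $x$ lies within $\tfrac{s\sqrt n}{2}$ of $X_a$, hence in a hypercube $c_x$ that $X_a$ passes through; then $f(x)$ is the arclength position of the center of $c_x$ along $X_a$, and by the choice $f\ll-\gamma$ off the curve, any $x'$ with $|f(x)-f(x')|\leq js$ likewise lies in a curve hypercube $c_{x'}$ whose center is at arclength distance $\leq js$ from that of $c_x$. Concatenate $x\to\mathrm{center}(c_x)$, then $\mathrm{center}(c_x)\to\mathrm{center}(c_{x'})$ along $X_a$, then $\mathrm{center}(c_{x'})\to x'$; its length is at most $\tfrac{s\sqrt n}{2}+js+\tfrac{s\sqrt n}{2}=js+s\sqrt n\leq(j+1)\sqrt n\,s$ using $n\geq1$. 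Every point $x''$ on this path admits an admissible region of mass $\geq r$: on the end segments take $T=c_x$ (resp.\ $c_{x'}$), which is on the curve and so has mass $\geq r$ by Step 1, and $x''$ stays inside that hypercube of diameter $\sqrt n s$; on the middle portion $x''$ lies on an edge of $X_a$ joining two adjacent curve-hypercube centers, hence within $\tfrac{s\sqrt n}{2}\leq\sqrt n s$ of the closer of the two, whose mass is again $\geq r$. Therefore $d_{r,\sqrt n s}(x,x')\leq(j+1)\sqrt n s$, so $L_C(x,x')=0$, and since $x'$ was arbitrary the pointwise bound holds.

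The main obstacle is the geometric bookkeeping of Step 2 rather than Step 1. The delicate point is arguing cleanly that $L_a(x)=0$ forces $x$ into a hypercube $X_a$ genuinely passes through, so that $f(x)$ is a finite arclength coordinate rather than the sentinel value $\ll-\gamma$ --- this may require reading ``hypercube through which $X_a$ passes'' to include all hypercubes within half a diagonal of the curve, or a small amendment to $f_L$ --- together with tracking the several $\tfrac{s\sqrt n}{2}$-versus-$\sqrt n s$ constants uniformly along the path, in particular at the junctions between segments. Step 1's only subtlety is that $r$ is presented through an exact binomial tail rather than as a confidence interval, which I would rephrase at the outset, plus ensuring the union-bound cardinality is independent of $P_X$.
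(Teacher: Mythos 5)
Your proof follows essentially the same route as the paper's: a union bound over the at most $\gamma/s$ hypercubes on the curve to guarantee $p_c\geq r$ with probability $1-\frac{\delta}{3}$, then a geometric argument that $|f(x)-f(x')|\leq js$ forces a path of length at most $(j+1)\sqrt{n}s$ through hypercubes of mass at least $r$, and finally charging the remaining mass to $R_a(P_X)\leq\epsilon_A$; your Step 2 is simply a more explicit version of the paper's one-line path claim. The delicate point you flag --- that $L_a(x)=0$ only places $x$ within $\frac{s\sqrt{n}}{2}$ of the curve, not necessarily in a hypercube whose center lies on it --- is a genuine looseness that the paper's own proof also elides, so your proposed reading of $f_L$ is a fair (and arguably necessary) patch rather than a divergence.
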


\begin{proof}
Recall that $p_c$ is the probability mass within hypercube $c$ and $\hat{p}_c$ is the empirical estimate of this mass obtained from the sample $S_u$. With probability at least $1-\frac{s \delta }{3\gamma}$ we have the following for a single hypercube $c$ whose center lies on $X_a$:

$p_c$

$\geq\min\{p_c | P[Bin(m_u;p_c) \geq \hat{p}_cm_u]\geq \frac{s \delta }{3\gamma}\}$





$\geq r$.

There are at most $\frac{\gamma}{s}$ hypercubes whose centers lie on $X_a$. By the union bound, with probability at least $1-\frac{\delta}{3}$, $p_c\geq r$ for all of these hypercubes. In that case, if $x$ is in a hypercube whose center lies on $X_a$, then for all $x'$, if $|f(x)-f(x')|\leq js$ then there must be a path between $x$ and $x'$ passing through at most $j+1$ hypercubes each of which contains probability mass at least $r$. Such a path must be of length at most $(j+1)\sqrt{n}s$. Therefore for such values of $x$, for all $x'$, $|f(x)-f(x')|\leq js\implies d_{r,\sqrt{n}s}(x,x')\leq (j+1)\sqrt{n}s$ and and hence $L_C(x,x')=0$. Since $R_A(P_X)\leq\epsilon_A$, the chances of drawing some $x$ in a hypercube whose center does not lie on $X_a$ is at most $\epsilon_A$. Therefore $R_C(f)\leq\epsilon_A$, still with probability at least $1-\frac{\delta}{3}$.
\end{proof}

\subsection{Condition $D(h_L)$}
\label{manifold_hL_property}

Let $h_L$ be the 1-nearest neighbor learner using Euclidean distance trained on the labeled sample $S_l$, yielding the hypothesis $h(x)=y^*$, where $\{x^*,y^*\}=\underset{\{x',y'\} \in S_l}{\mathrm{argmin}}||x-x'||_2$. Similarly, let $h^Z_L$ be the 1-nearest neighbor learner using Euclidean distance trained on the transformed labeled sample $S_l^Z$, yielding the hypothesis $h^Z(f(x))=y^*$, where $\{f(x^*),y^*\}=\underset{\{f(x'),y'\} \in S_l^Z}{\mathrm{argmin}}|f(x)-f(x')|$. Note that the bound on $R(h^Z \circ f)$ shown is independent of $P_{XY}$ given $R_B(P_{XY})\leq\epsilon_B$ and $R_C(f)\leq\epsilon_C$.

\begin{lemma}
\label{manifold_hL_lemma}
Let $\epsilon_{\max}^Z:=\underset{t \in [0,1]}{\max}(\frac{\gamma}{js}-\frac{\delta}{3}(1-t)^{-m_l})t+\epsilon_B+\epsilon_C$. With probability at least $1-\frac{\delta}{3}$, $R_B(P_{XY})\leq\epsilon_B \wedge R_C(f)\leq\epsilon_C \implies R(h^Z\circ f)\leq\epsilon_{\max}^Z$.
\end{lemma}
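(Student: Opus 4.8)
The argument mirrors that of Lemma~\ref{cluster_hL_theorem}, with the $k+1$ one-hot codes there replaced by a partition of the one-dimensional feature range. The feature map $f$ of Section~\ref{manifold_fL_Condition} sends $X$ into $[0,\gamma]$ together with a single constant $c\ll-\gamma$ for the off-manifold points, so first I would cut $[0,\gamma]$ into $\gamma/(js)$ consecutive intervals (bins) of width $js$ and apply Lemma~\ref{alpha_lemma} with $k=\gamma/(js)$ and confidence $\frac{\delta}{3}$. This gives: with probability at least $1-\frac{\delta}{3}$ over the draw of $S_l$, the $P_X$-probability that $f(x)$ lands in one of these $\gamma/(js)$ bins that contains no point of $S_l^Z$ is at most $\alpha:=\max_{t\in[0,1]}(\frac{\gamma}{js}-\frac{\delta}{3}(1-t)^{-m_l})t$. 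This is the only place randomness enters the proof.

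Next I would bound the misclassification probability of a test pair $\{x,y\}\sim P_{XY}$ by a union of three events. Let $V_C$ be the set of $x$ such that either $f(x)\notin[0,\gamma]$ or some $x'$ has $|f(x)-f(x')|\le js$ while $d_{r,\sqrt n s}(x,x')>(j+1)\sqrt n s$; by the construction of $f$ in Lemma~\ref{manifold_fL_lemma} the second alternative is contained in the first, and $R_C(f)\le\epsilon_C$ bounds the $P_X$-mass of $V_C$ by $\epsilon_C$. Let $V_B$ be the set of $\{x,y\}$ for which some $\{x',y'\}$ has $p(x',y')>0$, $d_{r,\sqrt n s}(x,x')\le(j+1)\sqrt n s$ and $y'\neq y$; by $R_B(P_{XY})\le\epsilon_B$ its mass is at most $\epsilon_B$. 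Finally let $V_\alpha$ be the set of $x$ with $f(x)\in[0,\gamma]$ lying in an empty bin, of mass at most $\alpha$ on the good $S_l$-event above. I claim that $x\notin V_C\cup V_B\cup V_\alpha$ implies $h^Z(f(x))=y$: then $f(x)\in[0,\gamma]$ sits in a bin holding some $\{f(x_0),y_0\}\in S_l^Z$, so the Euclidean nearest neighbour $\{f(x^*),y^*\}$ in $S_l^Z$ obeys $|f(x)-f(x^*)|\le|f(x)-f(x_0)|\le js$; since $x\notin V_C$ this forces $d_{r,\sqrt n s}(x,x^*)\le(j+1)\sqrt n s$; and $x^*$ was drawn with $y^*$ from $P_{XY}$, so $p(x^*,y^*)>0$, whence $x\notin V_B$ gives $y^*=y$ and $h^Z(f(x))=y^*=y$. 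A union bound then yields $R(h^Z\circ f)\le\epsilon_C+\epsilon_B+\alpha=\epsilon_{\max}^Z$, with probability at least $1-\frac{\delta}{3}$.

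I expect the main obstacle to be the accounting around the off-manifold mass: confirming via the construction in Lemma~\ref{manifold_fL_lemma} that points with $f(x)=c$ (which lie in none of the $\gamma/(js)$ bins) are already charged to the $\epsilon_C$ term rather than requiring an extra bin in Lemma~\ref{alpha_lemma}, and checking that the density-distance function $d_{r,\sqrt n s}$ --- in particular the parameter $r$, constrained through $\delta$ --- is the same object in conditions $B$ and $C$ so that the implication chain $|f(x)-f(x^*)|\le js\Rightarrow d_{r,\sqrt n s}(x,x^*)\le(j+1)\sqrt n s\Rightarrow y^*=y$ is internally consistent. The nearest-neighbour monotonicity step and the union bound itself are routine, exactly as in Lemma~\ref{cluster_hL_theorem}.
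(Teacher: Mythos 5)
Your proposal follows the paper's proof almost exactly: the same partition of $[0,\gamma]$ into $\gamma/(js)$ bins with Lemma~\ref{alpha_lemma} at confidence $\frac{\delta}{3}$, the same three-event union bound giving $\alpha+\epsilon_B+\epsilon_C$, and the same nearest-neighbor monotonicity step. The one place you diverge is the off-manifold mass, which you correctly single out as the delicate point --- but your resolution does not work as stated. You fold $\{x: f(x)\notin[0,\gamma]\}$ into $V_C$ and claim $R_C(f)\leq\epsilon_C$ bounds its mass. By the definition of $R_C$, a point $x$ is charged only if some $x'$ with $|f(x)-f(x')|\leq js$ has $d_{r,\sqrt{n}s}(x,x')>(j+1)\sqrt{n}s$; for an off-manifold $x$ the only such $x'$ are other off-manifold points (all mapped to the same constant $c\ll-\gamma$), and $L_C(x,x')$ need not equal $1$ for those, so the claimed containment fails. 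What actually controls the off-manifold probability in this example is condition $A$: in the proof of Lemma~\ref{manifold_fL_lemma} it is bounded by $\epsilon_A$ via $R_a(P_X)\leq\epsilon_A$, which is not among the hypotheses of the present lemma. To be fair, the paper's own proof simply ignores the issue --- it tacitly treats all of $P_X$ as landing in the $\gamma/(js)$ bins --- so you have identified a genuine looseness in the source; but the clean fix is either to add the off-manifold feature value as an extra bin (i.e., apply Lemma~\ref{alpha_lemma} with $\frac{\gamma}{js}+1$ bins, exactly as the cluster example uses $k+1$ bins for the zero vector in Lemma~\ref{cluster_hL_theorem}) or to carry $R_a(P_X)\leq\epsilon_A$ into the antecedent, not to charge that mass to the $R_C$ event.
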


\begin{proof}
Partition $[0,\gamma]$ into $\frac{\gamma}{js}$ intervals of equal width. For a point $f(x)$ in an interval containing at least one point in the sample $S_l^Z$, $\underset{\{f(x'),y'\} \in S_l^Z}{\min}|f(x)-f(x')|\leq js$. With probability at least $1-\frac{\delta}{3}$, $\mathbb{E}_{x \sim P_X}[\mathbf{1}(\underset{\{f(x'),y'\} \in S_l^Z}{\min}|f(x)-f(x')|> js)]\leq\underset{t \in [0,1]}{\max}(\frac{\gamma}{js}-\frac{\delta}{3}(1-t)^{-m_l})t=:\alpha$, by Lemma \ref{alpha_lemma}.

Recall that by the definition of $R_C(f)$, if we randomly draw a point $x$ from $P_X$, the probability that there exists some $x'$ such that $|f(x)-f(x')|\leq js$ and $d_{r,\sqrt{n}s}(x,x')> (j+1)\sqrt{n}s$ is at most $\epsilon_C$. Furthermore recall by the definition of $R_B(P_{XY})$, if we randomly draw a point $\{x,y\}$ from $P_{XY}$, the probability that there exists some supported $\{x',y'\}$ such that $d_{r,\sqrt{n}s}(x,x')\leq (j+1)\sqrt{n}s$  and $y' \neq y$ is at most $\epsilon_B$. Combining the last three bounds by the union bound, with probability at most $\alpha+\epsilon_C + \epsilon_B$, for a point $\{x,y\}$ randomly drawn from $P_{XY}$, there is either no point $\{f(x'),y'\} \in S_l^Z$ such that $|f(x)-f(x')|\leq js$, or $d_{r,\sqrt{n}s}(x,x')> (j+1)\sqrt{n}s$ for at least one point $\{f(x'),y'\} \in S_l^Z$ such that $|f(x)-f(x')|\leq js$, or $y' \neq y$ for at least one point in $\{f(x'),y'\} \in S_l^Z$ such that $d_{r,\sqrt{n}s}(x,x')\leq (j+1)\sqrt{n}s$. If none of these possibilities occur, a hypothesis $h^Z \circ f$ constructed using the 1-nearest neighbor learner and $S_l^Z$ will be able to correctly classify $x$. Therefore we have with probability at least $1-\frac{\delta}{3}$, $R(h^Z \circ f)\leq \alpha + \epsilon_B + \epsilon_C$ as required.
\end{proof}

We have now stated all lemmas used by the proof of Theorem \ref{manifold_theorem}, which is the main result for the manifold representation example.

\end{document}